\newtheorem{example}{Example}
\newtheorem{theorem}{Theorem}
\newtheorem{definition}{Definition}
\newtheorem{corollary}{Corollary}
\newtheorem{lemma}{Lemma}
\newenvironment{myproof}[1]
{\par\noindent\textbf{Proof of #1.}\ \enspace\ignorespaces\begin{allowdisplaybreaks}}
{\end{allowdisplaybreaks}\hspace{\stretch{1}}$\square$}
\title{Quality-Diversity Algorithms Can Provably Be Helpful for Optimization}
\author{
Chao Qian
\and
Ke Xue\And
Ren-Jian Wang\\
\affiliations
National Key Laboratory for Novel Software Technology, Nanjing University, Nanjing 210023, China\\
School of Artificial Intelligence, Nanjing University, Nanjing 210023, China
\emails
\{qianc, xuek, wangrj\}@lamda.nju.edu.cn
}
\begin{document}

\maketitle

\begin{abstract}
Quality-Diversity (QD) algorithms are a new type of Evolutionary Algorithms (EAs), aiming to find a set of high-performing, yet diverse solutions. They have found many successful applications in reinforcement learning and robotics, helping improve the robustness in complex environments. Furthermore, they often empirically find a better overall solution than traditional search algorithms which explicitly search for a single highest-performing solution. However, their theoretical analysis is far behind, leaving many fundamental questions unexplored. In this paper, we try to shed some light on the optimization ability of QD algorithms via rigorous running time analysis. By comparing the popular QD algorithm MAP-Elites with $(\mu+1)$-EA (a typical EA focusing on finding better objective values only), we prove that on two NP-hard problem classes with wide applications, i.e., monotone approximately submodular maximization with a size constraint, and set cover, MAP-Elites can achieve the (asymptotically) optimal polynomial-time approximation ratio, while $(\mu+1)$-EA requires exponential expected time on some instances. This provides theoretical justification for that QD algorithms can be helpful for optimization, and discloses that the simultaneous search for high-performing solutions with diverse behaviors can provide stepping stones to good overall solutions and help avoid local optima.
\end{abstract}

\section{Introduction}

Search algorithms, e.g., simulated annealing, Evolutionary Algorithms (EAs), and Bayesian optimization, are traditionally to find a single highest-performing solution to the problem at hand. However, for many real-world tasks, e.g., in reinforcement learning~\cite{DIAYN,neuroevolution-iclr23}, robotics~\cite{cully2015robots,robot-few-shot-qd}, and human-AI coordination~\cite{TrajeDi,fcp}, there is need for finding a set of high-performing solutions that exhibit varied behaviors. A behavior space can be characterized by dimensions of variation of interest to a user. For example, if searching for robot gaits that can run fast, the dimensions of interest can be the portion of time that each leg is in contact with the ground. Illuminating high-performing solutions of each area of the entire behavior space reveals relationships between dimensions of interest and performance, which can help improve the robustness in complex environments, e.g., help robots adapt to damage (i.e., discover a behavior that compensates for the damage) rapidly~\cite{cully2015robots}.

Quality-Diversity (QD) algorithms~\cite{qd-frontier,qd-survey-framework,qd-survey-optimization} are a subset of EAs, and have emerged as a potent optimization paradigm for the challenging task of finding a set of high-quality and diverse solutions. They follow a typical evolutionary process, which iteratively select some parent solutions from the population (also called archive in the QD area), apply variation operators (e.g., crossover and mutation) to produce offspring solutions, and use these offspring solutions to update the population. Lehman and Stanley~\shortcite{lehman2011evolving} proposed the QD algorithm, Novelty Search
and Local Competition (NSLC), by maximizing a local competition objective (e.g., measured by the number of nearest neighbors of a solution worse than itself) and a novelty objective (e.g., measured by the average distance of a solution to its nearest neighbors) simultaneously. The Multi-Objective Landscape Exploration
(MOLE) algorithm~\cite{clune2013evolutionary} is similar to NSLC, except that a global performance objective, instead of the local competition objective, is considered. 

Mouret and Clune~\shortcite{mouret2015illuminating} proposed the Multi-dimensional Archive of Phenotypic Elites (MAP-Elites) algorithm to achieve the goal of finding a set of high-quality and diverse solutions more straightforwardly. MAP-Elites maintains an archive by discretizing the behavior space into cells and storing at most one solution in each cell. The generated solutions compete only when their behaviors belong to the same cell, and MAP-Elites tries to fill in the cells with as high-quality solutions as possible. MAP-Elites performs better than NSLC and MOLE empirically~\cite{mouret2015illuminating}, and has become the most popular QD algorithm. Many advanced variants have also been proposed to improve the sample efficiency of MAP-Elites, e.g., refining parent selection strategies~\cite{qd-survey-framework,mc-me,edocs,nss} and variation operators~\cite{PGA-ME,dqd,DQD-RL,qd-pg,dcg-me}, improving archive structure~\cite{cvt-me,AURORA-CSC}, using surrogate models~\cite{model-based-qd,dsage}, and applying cooperative coevolution~\cite{ccqd}. These algorithms have achieved encouraging performance in various reinforcement learning tasks, such as exploration~\cite{NSR-ES,first-return}, robust training~\cite{one-solution,romance}, and environment generation~\cite{qd4ge,warehouse}. Recently, Wang et al.~\shortcite{wang2024resource} have tried to improve the resource efficiency of QD algorithms for the first time.

Besides generating a diverse set of high-performing solutions, extensive experiments (e.g., on evolving locomoting virtual creatures~\cite{lehman2011evolving}, producing modular neural networks~\cite{mouret2015illuminating}, 
designing simulated and real soft robots~\cite{mouret2015illuminating}, encouraging exploration of agents in deceptive~\cite{NSR-ES,dvd} and sparse reward~\cite{ME-ES} environments, and solving traveling thief problems~\cite{qd-ttp}) have shown that QD algorithms can find a better overall solution than traditional search algorithms which focus on finding the single highest-performing solution. That is, if only looking at the best achieved objective value, QD algorithms are still better. Even when searching for behavioral novelty only while ignoring the objective, the empirical results on maze navigation and biped walking show that such a novelty search significantly outperforms objective-based search~\cite{lehman2011abandoning}. The better optimization ability of QD algorithms may be due to the better exploration of the search space, which helps avoid local optima and thus find better objective values.

Compared to their wide application, the theoretical analysis of QD algorithms is far behind. To the best of our knowledge, there have existed only two pieces of works on running time analysis of QD algorithms. Nikfarjam et al.~\shortcite{nikfarjam2022analysis} analyzed the MAP-Elites algorithm for solving the knapsack problem. They proved that by using carefully designed two-dimensional behavior spaces, MAP-Elites can simulate dynamic programming behaviors to find an optimal solution within expected pseudo-polynomial time. Bossek and Sudholt~\shortcite{bossek2023runtime} further proved that for maximizing monotone submodular functions with a size constraint, MAP-Elites using the number of items contained by a set as the behavior descriptor can achieve a $(1-1/e)$-approximation ratio in expected polynomial time; for the minimum spanning tree problem, by using the number of connected components as the behavior descriptor, MAP-Elites can find a minimum spanning tree in expected polynomial time. Bossek and Sudholt~\shortcite{bossek2023runtime} also analyzed MAP-Elites for solving any pseudo-Boolean problem $f: \{0,1\}^n \rightarrow \mathbb{R}$. The number of 1-bits of a solution is used as the behavior descriptor, and the $i$-th cell of the behavior space stores the best found solution with the number of 1-bits belonging to $[(i-1)k,ik-1]$, where $k$ (dividing $n+1$) is a granularity parameter. They proved the upper bound on the expected running time of MAP-Elites until covering all cells. For the specific pseudo-Boolean problem OneMax which aims to maximize the number of 1-bits of a solution, they proved that the derived bound is tight, even for the time to find the optimal solution within each cell. There are also some preliminary studies trying to understand the behavior of novelty search from a theoretical perspective. Doncieux et al.~\shortcite{doncieux2019novelty} hypothesized that novelty search asymptotically behaves like uniform sampling in the behavior space, and Wiegand~\shortcite{wiegand2023preliminary} showed that novelty search will converge in the archive space by utilizing $k$ nearest neighbor theory.

In this paper, we try to theoretically examine the fundamental question: whether QD algorithms can be helpful for optimization, i.e., finding better objective values? For this purpose, we compare the popular QD algorithm MAP-Elites with $(\mu+1)$-EA via rigorous running time analysis. $(\mu+1)$-EA maintains a population of $\mu$ solutions, and focuses on finding better objective values by iteratively generating an offspring solution via mutation and using it to replace the worst solution in the population if it is better. Note that for fair comparison, the population size $\mu$ of $(\mu+1)$-EA is set to the number of cells of the archive of MAP-Elites, and all their parameters are set to the same. The only difference is that for MAP-Elites, the generated offspring solution only competes with the solution in the same cell of the archive, while for $(\mu+1)$-EA, the offspring competes with all solutions in the population, and replaces the worst one if it is better. Thus, their performance gap can be purely attributed to the QD algorithms' framework of searching for a set of high-quality and diverse solutions. 

We consider two NP-hard problem classes with wide applications. For monotone approximately submodular maximization with a size constraint~\cite{das2011submodular}, we prove that MAP-Elites achieves the optimal polynomial-time approximation ratio of $1-e^{-\gamma}$~\cite{harshaw2019submodular}, where $\gamma \in [0,1]$ measures the closeness of a set function $f$ to submodularity (\textbf{Theorem~\ref{theo-nonsubmodular}}). When $f$ is submodular, $\gamma=1$, and the approximation ratio of MAP-Elites is $1-1/e$ (\textbf{Corollary~\ref{corollary-submodular}}); while there exists an instance where $(\mu+1)$-EA requires at least expected exponential time to achieve an approximation ratio of nearly $1/2$ (\textbf{Theorem~\ref{theo_ea_mc}}). For the set cover problem with $m$ elements~\cite{feige1998threshold}, we prove that MAP-Elites achieves the asymptotically optimal approximation ratio of $\ln m +1$ in expected polynomial time (\textbf{Theorem~\ref{theo-sc}}), while there exists an instance where $(\mu+1)$-EA requires at least expected exponential time to achieve an approximation ratio smaller than $2^{m+1}/m$ (\textbf{Theorem~\ref{theo_ea_sc}}). These results clearly show the advantage of MAP-Elites over $(\mu+1)$-EA, providing theoretical justification for that QD algorithms can be helpful for optimization. The analyses disclose that the simultaneous search of QD algorithms for high-performing solutions at diverse regions in the behavior space plays a key role, which provides stepping stones to high-performing solutions in a region that may be difficult to be discovered if searching only in that region (which may get trapped in local optima).

The rest of the paper first introduces the studied algorithms, i.e., MAP-Elites and $(\mu+1)$-EA, and then gives the theoretical analysis on the two problem classes, monotone approximately submodular maximization with a size constraint and set cover, respectively. Finally, we conclude this paper.

\section{Preliminaries}

In this section, we will first introduce the MAP-Elites algorithm and $(\mu+1)$-EA, respectively, and then introduce their settings used in our analysis for fairness and how to measure their performance. 

\subsection{MAP-Elites Algorithm}

MAP-Elites is the most popular QD algorithm~\cite{mouret2015illuminating,cully2015robots}. Given an objective function $f: \mathcal{X} \rightarrow \mathbb{R}$ to be maximized, where $\mathbb{R}$ denotes the set of reals, MAP-Elites tries to find a set of high-quality (i.e., having large $f$ values) and diverse solutions, instead of a single highest-performing solution (i.e., a solution having the largest $f$ value). Specifically, a user chooses $m$ dimensions of variation of interest that define a behavior space, and MAP-Elites aims to create a map of high-performing solutions at each
point in the behavior space. The behavior space can be characterized by a behavior descriptor function $\bm{b}: \mathcal{X} \rightarrow \mathbb{R}^m$. For a solution $\bm{x} \in \mathcal{X}$, $\bm{b}(\bm{x})$ returns an $m$-dimensional vector describing $\bm{x}$'s behaviors.

In our theoretical analysis, we use the simple default version of MAP-Elites~\cite{mouret2015illuminating}, as presented in Algorithm~\ref{algo:map-elites}. For each behavior descriptor vector $\bm{b}(\cdot)$, an archive $M$ uses a cell (denoted as $M(\bm{b}(\cdot))$) to store the highest-performing solution found so far. Note that multiple solutions may have the same behavior descriptor vector. If no solution is found for a cell in the archive $M$, the cell is empty. The archive $M$ is initialized to be empty in line~1. The first $I$ solutions are randomly generated (lines~4--5). After that, a solution $\bm{x}$ is randomly chosen from the archive $M$ (line~7), and mutated to generate an offspring solution $\bm{x}'$ (line~8) in each iteration. Each generated solution $\bm{x}'$ will be used to update the corresponding cell (i.e., $M(\bm{b}(\bm{x}'))$) in $M$, as shown in lines~10--12. If the cell is empty (i.e., $M(\bm{b}(\bm{x}'))=\emptyset$), $\bm{x}'$ will be directly placed in the cell. If $\bm{x}'$ is higher-performing than the current occupant of the cell (i.e., $f(\bm{x}')>f(M(\bm{b}(\bm{x}')))$), the occupant will be replaced by $\bm{x}'$. Note that according to user preference or available computational resources, multiple behavior descriptor vectors may be merged into one cell in practice.

\begin{algorithm}[t]\caption{MAP-Elites Algorithm}\label{algo:map-elites}
\textbf{Input}: an objective function $f: \mathcal{X} \rightarrow \mathbb{R}$, and a behavior descriptor function $\bm{b}: \mathcal{X} \rightarrow \mathbb{R}^m$\\
\textbf{Parameter}: \#initial solutions $I$ 
    \begin{algorithmic}[1]
   \STATE Initialize an empty archive $M \gets \emptyset$;
   \STATE $iter=0$;
    \STATE \textbf{repeat}
    \STATE \quad \textbf{if} $iter < I$ \textbf{then}
    \STATE \qquad Choose $\bm x'$ from $\mathcal{X}$ uniformly at random
    \STATE \quad \textbf{else}
    \STATE \qquad Choose $\bm x$ from $M$ uniformly at random;
    \STATE \qquad Create $\bm{x}'$ by mutating $\bm{x}$
    \STATE \quad \textbf{end if}
    \STATE \quad \textbf{if} {$M(\bm{b}(\bm{x}'))=\emptyset$} or {$f(\bm{x}')>f(M(\bm{b}(\bm{x}')))$} \textbf{then}
    \STATE \qquad $M(\bm{b}(\bm{x}'))=\bm{x}'$
    \STATE \quad \textbf{end if}
    \STATE \quad $iter=iter+1$
    \STATE \textbf{until} some criterion is met
    \STATE \textbf{return} $M$
    \end{algorithmic}
\end{algorithm}

\subsection{$(\mu+1)$-EA}

$(\mu+1)$-EA is a typical EA focusing on finding solutions with larger $f$ values, which has been widely used in theoretical analyses of EAs, e.g.,~\cite{jansen2001utility,witt2006runtime,storch2008choice,qian2021analysis}. As presented in Algorithm~\ref{algo:ea}, $(\mu+1)$-EA maintains a set of $\mu$ solutions, i.e., employs a population size of $\mu$. It first initializes a population $P$ by randomly generating $\mu$ solutions in line~1. Then, in each iteration, it generates one new solution $\bm{x}'$ by mutating a solution $\bm{x}$ randomly chosen from the population $P$ (lines~3--4), and uses $\bm{x}'$ to replace the worst solution in $P$ if $\bm{x}'$ is better (lines~5--8). 

\begin{algorithm}[t]\caption{$(\mu+1)$-EA}\label{algo:ea}
\textbf{Input}: an objective function $f: \mathcal{X} \rightarrow \mathbb{R}$\\
\textbf{Parameter}: population size $\mu$ 
    \begin{algorithmic}[1]
    \STATE Initialize a population $P$ by choosing $\mu$ solutions from $\mathcal{X}$ uniformly at random;
    \STATE \textbf{repeat}
    \STATE \quad Choose $\bm x$ from $P$ uniformly at random;
    \STATE \quad Create $\bm{x}'$ by mutating $\bm{x}$;
    \STATE \quad Let $\bm{y}=\arg\min_{\bm{y}\in P}f(\bm{y})$; ties are broken randomly;
    \STATE \quad \textbf{if} {$f(\bm{x}')>f(\bm{y})$} \textbf{then}
    \STATE \qquad $P \gets (P \setminus \{\bm {y} \}) \cup \{\bm {x}'\}$
    \STATE \quad \textbf{end if}
    \STATE \textbf{until} some criterion is met
    \STATE \textbf{return} $P$
    \end{algorithmic}
\end{algorithm}

\subsection{Algorithm Settings and Performance Measurement}\label{subsec-algo-setting}

To theoretically show that QD algorithms can be helpful for optimization (i.e., finding a single, best solution), we will compare MAP-Elites in Algorithm~\ref{algo:map-elites} with $(\mu+1)$-EA in Algorithm~\ref{algo:ea}. As the problems studied in Sections~\ref{sec-submodular} and~\ref{sec-setcover} are pseudo-Boolean problems, i.e., the solution space $\mathcal{X} = \{0,1\}^n$ (where $n$ denotes the problem size), we employ the commonly used \emph{bit-wise mutation} operator in Algorithms~\ref{algo:map-elites} and~\ref{algo:ea}, which flips each bit of a solution $\bm x$ with probability $1/n$ to generate a new solution $\bm{x}'$. For fairness, the population size $\mu$ of $(\mu+1)$-EA is set to the number of cells in the archive $M$; as $(\mu+1)$-EA generates $\mu$ initial solutions randomly, the number of randomly generated initial solutions (i.e., the parameter $I$) of MAP-Elites is set equal to $\mu$. These settings are to make that the performance gap between MAP-Elites and $(\mu+1)$-EA is purely due to the mechanism of finding a set of high-quality and diverse solutions of MAP-Elites.

To be consistent with the original setting of MAP-Elites in~\cite{mouret2015illuminating}, we have used strict survivor selection, i.e., $f(\bm{x}')>f(M(\bm{b}(\bm{x}')))$ in line~10 of Algorithm~\ref{algo:map-elites} and $f(\bm{x}')>f(\bm{y})$ in line~6 of Algorithm~\ref{algo:ea}. However, our analyses in the following two sections will not be affected if allowing accepting the same good solutions, i.e., replacing $>$ with $\geq$. We consider maximization problems in Algorithms~\ref{algo:map-elites} and~\ref{algo:ea}. For minimization, we can obtain the corresponding algorithms by slight modification, i.e., replacing $>$ with $<$ in both algorithms, and changing $\bm{y}=\arg\min_{\bm{y}\in P}f(\bm{y})$ in line~5 of Algorithm~\ref{algo:ea} to $\bm{y}=\arg\max_{\bm{y}\in P}f(\bm{y})$.

To measure the performance of QD algorithms, the three metrics, i.e., \emph{optimization}, \emph{coverage} and \emph{QD-Score}, are often used~\cite{qd-frontier,qd-survey-framework}. Optimization corresponds to the largest objective value of solutions in the archive, i.e., the objective value of the highest-performing solution found by the algorithm. Coverage is the number of non-empty cells, i.e., the total number of solutions, in the archive. QD-Score is equal to the total sum of objective values across all solutions in the archive, which reflects both the quality and diversity of the found solutions. 

In this paper, we only focus on the optimization metric, serving for our purpose of showing that QD algorithms can be helpful for optimization, i.e., finding better objective values. Specifically, we will analyze the expected running time complexity of MAP-Elites or $(\mu+1)$-EA, which counts the number of objective function evaluations (the most time-consuming step in the evolutionary process) until finding an $\alpha$-approximate solution, and has been a leading theoretical aspect for randomized search heuristics~\cite{neumann.witt.10,auger.doerr.11,zhou2019evolutionary,doerr2020theory}. For maximization, an $\alpha$-approximate solution $\bm{x}$ means $f(\bm{x}) \geq \alpha \cdot \mathrm{OPT}$, where $\alpha \leq 1$ is the approximation ratio, and $\mathrm{OPT}$ is the optimal function value. For minimization, it means $f(\bm{x}) \leq \alpha \cdot \mathrm{OPT}$, where $\alpha \geq 1$. Because both MAP-Elites and $(\mu+1)$-EA perform only one objective evaluation (i.e., the evaluation $f(\bm{x}')$ of the generated offspring solution $\bm{x}'$) in each iteration, the expected running time equals the number (i.e., $\mu$) of initial solution evaluations plus the expected number of iterations.

\section{Monotone Approximately Submodular Maximization with a Size Constraint}\label{sec-submodular}

Let $\mathbb{R}$ and $\mathbb{R}^{+}$ denote the set of reals and non-negative reals, respectively. Given a finite non-empty set $V=\{v_1,\ldots,v_n\}$, a set function $f:2^V \rightarrow \mathbb{R}$ is defined on subsets of $V$. $f$ is monotone if $\forall X \subseteq Y$, $f(X) \leq f(Y)$. Without loss of generality, we assume that monotone functions are normalized, i.e., $f(\emptyset)=0$. A set function $f$ is submodular~\cite{nemhauser1978analysis} if for any $X \subseteq Y \subseteq V$ and $v \notin Y$,
\begin{align}\label{def-submodular-1}
f(X \cup \{v\})-f(X) \geq f(Y \cup \{v\}) - f(Y);
\end{align}
or equivalently for any $X \subseteq Y \subseteq V$,
\begin{align}\label{def-submodular-2}
f(Y)-f(X) \leq \sum\nolimits_{v \in Y \setminus X} \big(f(X \cup \{v\})-f(X)\big).
\end{align}
Eq.~(\refeq{def-submodular-1}) intuitively represents the ``diminishing returns" property, i.e., adding an item to a set gives a larger benefit than adding the same item to its superset. Based on Eq.~(\refeq{def-submodular-2}), the submodularity ratio as presented in Definition~\ref{def-approx-submodular-2} is introduced to measure how close a general set function $f$ is to submodularity. When $f$ is monotone, it holds: $\forall X, l:  0 \leq \gamma_{X,l}(f) \leq 1$; $f$ is submodular if and only if $\forall X, l: \gamma_{X,l}(f)=1$~\cite{qian2018approximation}.

\begin{definition}[Submodularity Ratio~\cite{das2011submodular}]\label{def-approx-submodular-2}
Let $f: 2^V \rightarrow \mathbb{R}$ be a set function. The submodularity ratio of $f$ with respect to a set $X \subseteq V$ and a parameter $l \geq 1$ is
$$
\gamma_{X,l}(f)=\min_{L \subseteq X, S: |S|\leq l, S \cap L =\emptyset} \frac{\sum_{v \in S} (f(L \cup \{v\})-f(L))}{f(L \cup S)-f(L)}.
$$
\end{definition}

As presented in Definition~\ref{def-prob-nonsubmodular}, the studied problem is to find a subset with at most $k$ items such that a given monotone and approximately submodular objective function is maximized. It is NP-hard, and has many applications, such as maximum coverage~\cite{feige1998threshold}, influence maximization~\cite{kempe2003maximizing}, sensor placement~\cite{krause2008near}, sparse regression~\cite{das2011submodular}, unsupervised feature selection~\cite{feng2019unsupervised}, and human assisted ridge regression~\cite{liu2023human}, just to name a few.

\begin{definition}[Monotone Approximately Submodular Function Maximization with a Size Constraint]\label{def-prob-nonsubmodular}
Given a monotone and approximately submodular function $f: 2^V \rightarrow \mathbb{R}^+$ and a budget $k$, the goal is to find a subset $X\subseteq V$ such that
$$
\arg \max\nolimits_{X \subseteq V}\;\; f(X)\quad  \text{s.t.} \quad |X| \leq k.
$$
\end{definition}

In the following analysis, we equivalently reformulate the above problem as an unconstrained maximization problem by setting $f(X)=-1$ when $|X|>k$, i.e., $X$ does not satisfy the size constraint. Note that a subset $X$ of $V$ can be naturally represented by a Boolean vector $\bm{x} \in \{0,1\}^n$, where the $i$-th bit $x_i=1$ if $v_i \in X$, and $x_i=0$ otherwise. Throughout the paper, we will not distinguish $\bm{x}\in \{0,1\}^n$ and its corresponding subset $\{v_i \in V \mid x_i=1\}$ for convenience.

\subsection{Analysis of MAP-Elites}

To apply MAP-Elites in Algorithm~\ref{algo:map-elites} to maximize monotone approximately submodular functions with a size constraint, we use the number of 1-bits (denoted as $|\bm{x}|$, which equals $\sum^{n}_{i=1} x_i$) of a solution $\bm{x}\in \{0,1\}^n$ as the behavior descriptor. Thus, the archive $M$ maintained by MAP-Elites contains $n+1$ cells, where for any $0\leq i\leq n$, the $i$-th cell stores the best found solution with $i$ 1-bits. In each iteration of MAP-Elites, if an offspring solution $\bm{x}'$ with $j$ 1-bits is generated, the $j$-th cell in the archive $M$ will be examined: if the cell is empty or the existing solution in the cell is worse than $\bm{x}'$, the $j$-th cell will then be occupied by $\bm{x}'$.

We prove in Theorem~\ref{theo-nonsubmodular} that after running $O(n^2(\log n+k))$ expected time, MAP-Elites achieves an approximation ratio of $1-e^{-\gamma_{\min}}$, which reaches the optimal polynomial-time approximation ratio~\cite{harshaw2019submodular}. Inspired from the analysis of GSEMO (a multi-objective EA with similar procedure to MAP-Elites)~\cite{friedrich2015maximizing,qian.nips15,qian2019maximizing}, the proof can be accomplished by following the behavior of the greedy algorithm~\cite{das2011submodular}. Note that the parameter $I=n+1$ implies that MAP-Elites generates $n+1$ initial solutions randomly. The value of $I$ does not affect the analysis, and such a setting is just for fairness, because the population size $\mu$ of $(\mu+1)$-EA will be set to the number of cells (i.e., $n+1$) contained by the archive $M$. 

\begin{theorem}\label{theo-nonsubmodular}
For maximizing a monotone approximately submodular function $f$ with a size constraint $k$, the expected running time of MAP-Elites with the parameter $I=n+1$, until finding a solution $\bm{x}$ with $|\bm{x}| = k$ and $$f(\bm{x}) \geq (1-e^{-\gamma_{\min}}) \cdot \mathrm{OPT},$$ is $O(n^2(\log n+k))$, where $\gamma_{\min}=\min_{\bm{x}:|\bm{x}|=k-1}\gamma_{\bm{x},k}$, $\gamma_{\bm{x},k}$ is the submodularity ratio of $f$ w.r.t. $\bm{x}$ and $k$ as in Definition~\ref{def-approx-submodular-2}, and $\mathrm{OPT}$ denotes the optimal function value.
\end{theorem}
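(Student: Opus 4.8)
The plan is to mimic the classical greedy analysis for submodular maximization, exploiting the fact that MAP-Elites maintains a full archive indexed by the number of 1-bits. The key structural observation is that once the $i$-th cell is occupied by some solution $\bm{x}$ with $|\bm{x}|=i$, MAP-Elites can, in a single mutation step, add one carefully chosen element to $\bm{x}$ to produce an offspring $\bm{x}'$ with $|\bm{x}'|=i+1$ that lands in the (possibly empty or worse-occupied) $(i+1)$-th cell. First I would establish a potential/progress lemma: if the $i$-th cell holds a solution $\bm{x}_i$ with $f(\bm{x}_i)\geq (1-(1-\gamma_{\min}/k)^i)\cdot\mathrm{OPT}$, then adding the best single element yields a solution $\bm{x}_{i+1}$ with the analogous bound for $i+1$. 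This follows from the submodularity-ratio definition (Definition~\ref{def-approx-submodular-2}): there must exist an element $v$ whose marginal gain $f(\bm{x}_i\cup\{v\})-f(\bm{x}_i)$ is at least $(\gamma_{\min}/k)(\mathrm{OPT}-f(\bm{x}_i))$, using that an optimal set $\bm{x}^*$ with $|\bm{x}^*|=k$ satisfies $\gamma_{\bm{x}_i,k}\cdot\big(f(\bm{x}_i\cup\bm{x}^*)-f(\bm{x}_i)\big)\leq\sum_{v\in\bm{x}^*\setminus\bm{x}_i}\big(f(\bm{x}_i\cup\{v\})-f(\bm{x}_i)\big)$ together with monotonicity.

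Next I would unroll this recursion. Starting from the empty set (the $0$-th cell, which is guaranteed to become occupied since $f(\emptyset)=0$ is normalized and any generated all-zeros string fills it), iterating the progress lemma $k$ times gives a solution in the $k$-th cell with $f(\bm{x}_k)\geq\big(1-(1-\gamma_{\min}/k)^k\big)\cdot\mathrm{OPT}\geq(1-e^{-\gamma_{\min}})\cdot\mathrm{OPT}$, where the last inequality uses the standard bound $1-t\leq e^{-t}$. Crucially, because MAP-Elites never discards a cell's occupant except to improve it, and because the behavior descriptor $|\bm{x}|$ partitions solutions into distinct cells, the best-so-far solution in each cell can only improve monotonically, so these intermediate stepping-stone solutions persist and the greedy chain cannot be destroyed. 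The value $\gamma_{\min}=\min_{\bm{x}:|\bm{x}|=k-1}\gamma_{\bm{x},k}$ is taken over exactly the sets that appear as predecessors in the final step, which is why this particular minimum suffices.

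The running-time bound $O(n^2(\log n+k))$ is the heart of the analysis and where I expect the main work to lie. I would use a multiplicative-drift or phase argument on the index $j^*$ of the highest-indexed cell along the greedy chain that has reached its target approximation bound. In any iteration, the probability of advancing $j^*$ by one is at least the probability of (i) selecting from the archive the solution occupying cell $j^*$, which is $\Omega(1/(n+1))$ since there are at most $n+1$ cells, times (ii) flipping exactly the one correct $0$-bit to $1$ while leaving all other bits unchanged, which has probability $(1/n)(1-1/n)^{n-1}=\Omega(1/n)$. Multiplying gives success probability $\Omega(1/n^2)$ per step, so each of the $k$ advances takes $O(n^2)$ expected iterations, yielding $O(n^2 k)$ in total. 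The additional $O(n^2\log n)$ term I would attribute to the initialization phase: ensuring that the chain is seeded, e.g. that some low-index cell (such as cell $0$, or a cell reachable from the random initial solutions) becomes occupied by a solution on the greedy trajectory, which by a coupon-collector-style argument over the $n+1$ cells costs $O(n^2\log n)$ expected time. The subtle obstacle is handling the interaction between the randomly generated initial solutions and the greedy chain: one must argue that regardless of where initialization lands, the empty-set cell (or an equivalent anchor with a provable approximation guarantee) is filled quickly, after which the clean $O(n^2 k)$ greedy-advance argument applies. I would also need to confirm that selecting from an archive whose size grows over time still gives the $\Omega(1/(n+1))$ selection probability uniformly, which holds because the number of cells is fixed at $n+1$ throughout.
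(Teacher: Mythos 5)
Your proposal matches the paper's proof essentially step for step: the same marginal-gain lemma derived from the submodularity ratio (the paper invokes it as Lemma~1, with the same use of $\gamma_{\bm{x}_i,k}\geq\gamma_{\min}$ for $|\bm{x}_i|<k$), the same greedy chain across cells tracked by the largest index $j$ whose cell holds a solution with $f\geq(1-(1-\gamma_{\min}/k)^j)\cdot\mathrm{OPT}$, the same per-advance probability bound $(1/(n+1))\cdot(1/n)(1-1/n)^{n-1}\geq 1/(en(n+1))$ giving $O(n^2k)$, and the same separate $O(n^2\log n)$ seeding phase whose job is to occupy cell $0$. The one mechanism you leave unspecified --- how cell $0$ actually gets filled starting from arbitrary random initial solutions --- is handled in the paper by a drift argument on $J_{\min}$, the minimum number of 1-bits in the archive: selecting the corresponding solution and flipping exactly one of its $i$ 1-bits produces a solution landing in a necessarily empty cell, so $J_{\min}$ decreases with probability at least $i/(en(n+1))$ per iteration, and the resulting harmonic sum $\sum_{i=1}^{n} en(n+1)/i=O(n^2\log n)$ is precisely the coupon-collector-flavored bound you anticipated.
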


The proof relies on Lemma~\ref{lemma-submodular}, which shows that any subset $\bm{x} \in \{0,1\}^n$ can be improved by adding a specific item, such that the increment on $f$ is roughly proportional to $\mathrm{OPT}-f(\bm{x})$ and depends on the submodularity ratio $\gamma_{\bm{x},k}$.

\begin{lemma}[Lemma~1 of \cite{qian2016parallel}]\label{lemma-submodular}
Given a monotone approximately submodular function $f$, $\forall \bm{x} \in \{0,1\}^n$, there exists one item $v \notin \bm{x}$ such that
\begin{align}\label{eq-mid-6}
&f(\bm{x} \cup \{v\})-f(\bm{x}) \geq \frac{\gamma_{\bm{x},k}}{k} \cdot (\mathrm{OPT}-f(\bm{x})).
\end{align}
\end{lemma}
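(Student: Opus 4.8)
The plan is to bound the best single-item marginal gain at $\bm{x}$ in terms of the ``missing mass'' $\mathrm{OPT}-f(\bm{x})$ by combining the submodularity-ratio inequality with an averaging argument over an optimal solution. First I would fix an optimal feasible set $X^\star$ with $|X^\star|\le k$ and $f(X^\star)=\mathrm{OPT}$, and set $S=X^\star\setminus\bm{x}$, the items of the optimum not yet contained in $\bm{x}$; the candidate $v$ will be the element of $S$ with the largest marginal gain. Before the main argument I would dispose of the trivial case $\mathrm{OPT}\le f(\bm{x})$ (which includes $S=\emptyset$, i.e.\ $X^\star\subseteq\bm{x}$, since then $f(\bm{x})\ge f(X^\star)$ by monotonicity): here the right-hand side of the claim is nonpositive, so any $v\notin\bm{x}$ works by monotonicity. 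Hence in what follows I may assume $\mathrm{OPT}-f(\bm{x})>0$ and $S\neq\emptyset$.

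The core step applies Definition~\ref{def-approx-submodular-2} with the choice $L=\bm{x}$ and this $S$. Since $L\subseteq\bm{x}$, $S\cap L=\emptyset$, and $|S|\le|X^\star|\le k$, the triple $(L,S)$ is admissible in the minimum defining $\gamma_{\bm{x},k}$, so
$$\sum\nolimits_{v\in S}\bigl(f(\bm{x}\cup\{v\})-f(\bm{x})\bigr)\ \ge\ \gamma_{\bm{x},k}\cdot\bigl(f(\bm{x}\cup S)-f(\bm{x})\bigr).$$
I would then use monotonicity twice: $\bm{x}\cup S=\bm{x}\cup X^\star$, so $f(\bm{x}\cup S)\ge f(X^\star)=\mathrm{OPT}$, which also guarantees the denominator $f(\bm{x}\cup S)-f(\bm{x})$ in the ratio is strictly positive; substituting turns the right-hand side into $\gamma_{\bm{x},k}(\mathrm{OPT}-f(\bm{x}))$. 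A final pigeonhole/averaging step over $S$ produces an item $v^\star\in S$ whose marginal gain is at least the average $\frac{1}{|S|}\sum_{v\in S}\bigl(f(\bm{x}\cup\{v\})-f(\bm{x})\bigr)$, and since $|S|\le k$ and $\mathrm{OPT}-f(\bm{x})>0$ this average is at least $\frac{\gamma_{\bm{x},k}}{k}(\mathrm{OPT}-f(\bm{x}))$, yielding the claim.

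There is little genuine difficulty in the estimates themselves; the argument is a single clean application of the defining inequality of $\gamma_{\bm{x},k}$ followed by averaging. The part that needs care, and which I regard as the main obstacle, is the bookkeeping around sign and degeneracy: one must isolate the case $\mathrm{OPT}\le f(\bm{x})$ up front, because it is exactly the positivity of $\mathrm{OPT}-f(\bm{x})$ that justifies both replacing $f(\bm{x}\cup S)$ by $\mathrm{OPT}$ as a genuine lower bound and enlarging the denominator $|S|$ to $k$ in the last step. I would also record that $\gamma_{\bm{x},k}\in[0,1]$ by the monotonicity of $f$ (as noted before Definition~\ref{def-approx-submodular-2}), so the constant multiplying $\mathrm{OPT}-f(\bm{x})$ is well-defined and the inequality is meaningful.
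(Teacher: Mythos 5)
Your proof is correct. The paper does not actually prove this lemma---it imports it by citation from \cite{qian2016parallel}---and your argument (instantiate the submodularity-ratio definition with $L=\bm{x}$ and $S=X^\star\setminus\bm{x}$, lower-bound $f(\bm{x}\cup S)$ by $\mathrm{OPT}$ via monotonicity, then average over the at most $k$ elements of $S$) is precisely the standard proof from that reference, with the degenerate case $\mathrm{OPT}\le f(\bm{x})$ handled cleanly up front.
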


\begin{myproof}{Theorem~\ref{theo-nonsubmodular}}
We divide the optimization process of MAP-Elites into two phases: (1) starts after initialization and finishes after finding the special solution $\bm{0}$ (i.e., the empty set); (2) starts after phase (1) and finishes after finding a solution with the desired approximation guarantee. Note that the archive $M$ will always contain at most $n+1$ solutions, because there are $n+1$ cells and the $i$-th cell only stores the best found solution with $i$ 1-bits. We will use $|M|$ to denote the number of solutions in the archive $M$, satisfying $|M|\leq n+1$.

For phase (1), we consider the minimum number of 1-bits of the solutions in the archive $M$, denoted by $J_{\min}$. That is, $J_{\min}=\min\{|\bm{x}| \mid \bm{x} \in M\}$. Assume that currently $J_{\min}=i>0$, and let $\bm{x}$ be the corresponding solution, i.e., $|\bm{x}|=i$. $J_{\min}$ cannot increase because $\bm{x}$ can only be replaced by a better solution (having a larger $f$ value) with the same number of 1-bits, i.e., belonging to the same cell. In each iteration of MAP-Elites, to decrease $J_{\min}$, it is sufficient to select $\bm{x}$ in line~7 of Algorithm~\ref{algo:map-elites} and flip only one 1-bit of $\bm{x}$ in line~8. This is because the newly generated solution $\bm{x}'$ now has $(i-1)$ 1-bits (i.e., $|\bm{x}'|=i-1$), and will enter into the $(i-1)$-th cell, which must be empty. The probability of selecting $\bm{x}$ in line~7 of Algorithm~\ref{algo:map-elites} is $1/|M| \geq 1/(n+1)$ due to uniform selection, and the probability of flipping only one 1-bit of $\bm{x}$ in line~8 is $(i/n)(1-1/n)^{n-1} \geq i/(en)$, since $\bm{x}$ has $i$ 1-bits and $(1-1/n)^{n-1}\geq 1/e$. Thus, the probability of decreasing $J_{\min}$ by 1 in each iteration of MAP-Elites is at least $i/(en(n+1))$. Because $J_{\min}$ is at most $n$, the expected number of iterations required by phase (1) (i.e., making $J_{\min}=0$) is at most $\sum^{n}_{i=1} en(n+1)/i=O(n^2\log n)$.

For phase (2), we consider a quantity $J_{\max}$, which is defined as the maximum value of $j \in \{0,1,\ldots,k\}$ such that in the archive $M$, there exists a solution $\bm{x}$ with $|\bm{x}| =j$ and $f(\bm{x}) \geq (1-(1-\gamma_{\min}/k)^j) \cdot \mathrm{OPT}$. We only need to analyze the expected number of iterations until $J_{\max}=k$, which implies that there exists one solution $\bm{x}$ in $M$ satisfying that $|\bm{x}| = k$ and $f(\bm{x}) \geq (1-(1-\gamma_{\min}/k)^k) \cdot \mathrm{OPT} \geq (1-e^{-\gamma_{\min}}) \cdot \mathrm{OPT}$. That is, the desired approximation guarantee is reached.

The current value of $J_{\max}$ is at least 0, since the archive $M$ contains the solution $\bm{0}$, satisfying that $|\bm{0}|=0$ and $f(\bm{0})=0$. Assume that currently $J_{\max}=i <k$, and let $\bm{x}$ be the corresponding solution, i.e., $|\bm{x}|= i$ and $f(\bm{x})\geq (1-(1-\gamma_{\min}/k)^i) \cdot \mathrm{OPT}$. $J_{\max}$ cannot decrease because $\bm{x}$ can only be replaced by a newly generated solution $\bm{x}'$ satisfying that $|\bm{x}'|=i$ and $f(\bm{x}') > f(\bm{x})$, as shown in lines~10--12 of Algorithm~\ref{algo:map-elites}. By Lemma~\ref{lemma-submodular}, we know that flipping one specific 0-bit of $\bm{x}$ (i.e., adding a specific item) can generate a new solution $\bm{x}'$, satisfying  $f(\bm{x}')-f(\bm{x}) \geq (\gamma_{\bm{x},k}/k) \cdot (\mathrm{OPT}-f(\bm{x}))$. Then, we have
\begin{align}
f(\bm{x}') &\geq \left(1-\gamma_{\bm{x},k}/k\right)\cdot f(\bm{x})+(\gamma_{\bm{x},k}/k)\cdot \mathrm{OPT} \\
&\geq \left(1-\left(1-\gamma_{\bm{x},k}/k\right)\left(1-\gamma_{\min}/k\right)^{i}\right)\cdot \mathrm{OPT}\\
&\geq \left(1-\left(1-\gamma_{\min}/k\right)^{i+1}\right)\cdot \mathrm{OPT},
\end{align}
where the second inequality holds by $f(\bm{x})\geq (1-(1-\gamma_{\min}/k)^i) \cdot \mathrm{OPT}$, and the last inequality holds by $\gamma_{\bm{x},k} \geq \gamma_{\min}=\min_{\bm{z}:|\bm{z}|=k-1}\gamma_{\bm{z},k}$, which can be derived from $|\bm{x}|=i<k$ and $\gamma_{\bm{z},k}$ decreasing with $\bm{z}$. Since $|\bm{x}'|=|\bm{x}|+1 = i+1$, $\bm{x}'$ will be included into the archive $M$; otherwise, the $(i+1)$-th cell must already contain a better solution $\bm{y}$ satisfying that $|\bm{y}|=i+1$ and $f(\bm{y})\geq f(\bm{x}')$, and this implies that $J_{\max}$ has already been larger than $i$, contradicting with the assumption $J_{\max}=i$. After including $\bm{x}'$, $J_{\max} = i+1$. Thus, $J_{\max}$ can increase by 1 in one iteration with probability at least $(1/|M|) \cdot (1/n)(1-1/n)^{n-1} \geq 1/(en(n+1))$, where $1/|M|\geq 1/(n+1)$ is a lower bound on the probability of selecting $\bm{x}$ in line~7 of Algorithm~\ref{algo:map-elites}, and $(1/n)(1-1/n)^{n-1}$ is the probability of flipping a specific bit of $\bm{x}$ while keeping other bits unchanged in line~8. This implies that it needs at most $en(n+1)$ expected number of iterations to increase $J_{\max}$. Thus, after at most $k \cdot en(n+1)$ iterations in expectation, $J_{\max}$ must have reached $k$.

By summing up the expected running time of the above two phases, we get that the expected running time of MAP-Elites for finding a solution $\bm{x}$ with $|\bm{x}|= k$ and $f(\bm{x})\geq (1-e^{-\gamma_{\min}}) \cdot \mathrm{OPT}$ is $O(n^2\log n +kn(n+1))=O(n^2(\log n+k))$. The additional evaluations for the $n+1$ initial solutions will not affect this upper bound. Thus, the theorem holds.\vspace{0.3em} 
\end{myproof}

When $f$ is submodular, $\forall \bm{x}: \gamma_{\bm{x},k}=1$, implying $\gamma_{\min}=1$. Thus, we have
\begin{corollary}\label{corollary-submodular}
For maximizing a monotone submodular function $f$ with a size constraint $k$, the expected running time of MAP-Elites with the parameter $I=n+1$, until finding a solution $\bm{x}$ with $|\bm{x}| = k$ and $$f(\bm{x}) \geq (1-1/e) \cdot \mathrm{OPT},$$ is $O(n^2(\log n+k))$.
\end{corollary}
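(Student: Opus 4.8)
The plan is to obtain this corollary as a direct specialization of Theorem~\ref{theo-nonsubmodular}, since submodularity is precisely the case in which the submodularity ratio attains its maximal value. Concretely, I would first invoke the characterization recalled just before Definition~\ref{def-approx-submodular-2}: a monotone set function $f$ is submodular if and only if $\gamma_{X,l}(f)=1$ for all $X$ and $l\geq 1$. Hence, when $f$ is submodular, $\gamma_{\bm{x},k}=1$ for every $\bm{x}\in\{0,1\}^n$, and in particular for every $\bm{x}$ with $|\bm{x}|=k-1$.

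The second step is to evaluate $\gamma_{\min}=\min_{\bm{x}:|\bm{x}|=k-1}\gamma_{\bm{x},k}$ under this observation. Since each term in the minimum equals $1$, we get $\gamma_{\min}=1$. Substituting $\gamma_{\min}=1$ into the approximation guarantee of Theorem~\ref{theo-nonsubmodular} yields $f(\bm{x})\geq (1-e^{-1})\cdot\mathrm{OPT}=(1-1/e)\cdot\mathrm{OPT}$, while the running-time bound $O(n^2(\log n+k))$ is stated independently of $\gamma_{\min}$ and therefore carries over unchanged. This gives exactly the claimed statement.

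I do not expect any genuine obstacle here: the entire content is packaged in Theorem~\ref{theo-nonsubmodular}, and the corollary merely records what that bound says in the submodular special case. The only point worth double-checking is that the submodular characterization $\gamma_{X,l}(f)=1$ applies to the specific $(X,l)=(\bm{x},k)$ pairs entering $\gamma_{\min}$, which it does because the characterization is universal over all $X$ and all $l\geq 1$, and here $k\geq 1$.
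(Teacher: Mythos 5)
Your proposal is correct and matches the paper's own derivation exactly: the paper obtains Corollary~\ref{corollary-submodular} by noting that submodularity implies $\gamma_{\bm{x},k}=1$ for all $\bm{x}$, hence $\gamma_{\min}=1$, and substituting into Theorem~\ref{theo-nonsubmodular}. Your additional check that the characterization applies to the specific pairs $(\bm{x},k)$ entering $\gamma_{\min}$ is a sound (if implicit in the paper) detail.
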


The result in Corollary~\ref{corollary-submodular} is consistent with Theorem~5.1 in~\cite{bossek2023runtime}. Note that the MAP-Elites algorithm analyzed in~\cite{bossek2023runtime} is slightly different from ours, which generates only one random initial solution (i.e., the parameter $I=1$) and replaces the solution in a cell if the generated offspring solution belonging to the same cell is not worse (i.e., $f(\bm{x}')>f(M(\bm{b}(\bm{x}')))$ in line~10 of Algorithm~\ref{algo:map-elites} changes to $f(\bm{x}')\geq f(M(\bm{b}(\bm{x}')))$). We can find that these slight differences do not affect the result.

\subsection{Analysis of $(\mu+1)$-EA}

To apply $(\mu+1)$-EA in Algorithm~\ref{algo:ea} to maximize monotone approximately submodular functions with a size constraint, we set the population size $\mu$ to the number of cells contained by the archive $M$ of MAP-Elites, which is $n+1$. As we have mentioned in Section~\ref{subsec-algo-setting}, such a setting is for fairness, which makes the population size of $(\mu+1)$-EA equal to the largest archive size of MAP-Elites. In each iteration of $(\mu+1)$-EA, the generated offspring solution $\bm{x}'$ will be used to replace the worst solution in the current population if $\bm{x}'$ is better.

Next, we will give a concrete instance of the classic NP-hard problem, maximum coverage~\cite{feige1998threshold} (which is an application of monotone submodular maximization with a size constraint), where $(\mu+1)$-EA fails to achieve a good approximation ratio in polynomial time. Given a family $V$ of sets that cover a universe $U$ of elements, the goal of maximum coverage as presented in Definition~\ref{def_mc} is to select at most $k$ sets from $V$ to make their union maximal. The objective function $f(\bm{x})=|\bigcup_{i: x_i=1}S_i|$ is monotone and submodular.

\begin{definition}[Maximum Coverage]\label{def_mc}
Given a ground set $U$, a collection $V=\{S_1,S_2,\ldots,S_n\}$ of subsets of $U$, and a budget $k$, the goal is to find a subset of $V$ (represented by $\bm{x} \in \{0,1\}^n$) such that
\begin{align}
\mathop{\arg\max}\nolimits_{\bm{x} \in \{0,1\}^n}  \;\; |\bigcup\nolimits_{i: x_i=1} S_i|  \quad \text{s.t.}  \quad |\bm{x}| \leq k.
\end{align}   
\end{definition}

The considered instance of maximum coverage as presented in Example~\ref{example-mc} is from~\cite{friedrich2010approximating,friedrich2015maximizing}, and can be obtained from the complete bipartite graph in Figure~\ref{fig_bg}(a). The ground set $U$ is given by the set of edges. Each vertex $v_i$ corresponds to one subset $S_i$ of $U$, which consists of the edges adjacent to the vertex $v_i$. The goal is to select at most $k=(1+\delta)n/3$ subsets of $U$ to make the number of contained edges maximal. 

\begin{example}\label{example-mc}
The ground set $U$ contains all the edges of the complete bipartite graph in Figure~\ref{fig_bg}(a). Each set $S_i$ in $V$ consists of the edges adjacent to the vertex $v_i$, i.e., $\forall 1 \leq i \leq (1+\delta)n/3$, $S_i=\{(v_i,v_{(1+\delta)n/3+1}),\ldots,(v_i,v_n)\}$; $\forall (1+\delta)n/3+1 \leq i \leq n$, $S_i=\{(v_i,v_1),\ldots,(v_i,v_{(1+\delta)n/3})\}$, where $(v_i,v_j)$ denotes the edge between the vertexes $v_i$ and $v_j$. The budget $k=(1+\delta)n/3$, and $\delta$ is a small positive constant close to 0.
\end{example}

The two parts of the complete bipartite graph in Figure~\ref{fig_bg}(a) contain $(1+\delta)n/3$ and $(2-\delta)n/3$ vertexes, respectively. As the budget $k=(1+\delta)n/3<(2-\delta)n/3$, the unique optimal solution is $\bm{x}^*=1^{(1+\delta)n/3}0^{(2-\delta)n/3}$ (i.e., $\{S_1,\ldots,S_{(1+\delta)n/3}\}$), which covers all the edges, and has the objective value $f(\bm{x}^*)=((1+\delta)n/3) \cdot ((2-\delta)n/3)=(1+\delta)(2-\delta)n^2/9$. We can also observe that if a solution $$\bm{x}_{\mathrm{local}} \!\in\! \{0^{(1+\delta)n/3}\bm{y} \mid \bm{y} \!\in\! \{0,1\}^{(2-\delta)n/3} \wedge |\bm{y}|\!=\!(1+\delta)n/3\},$$ i.e., contains $(1+\delta)n/3$ sets from $\{S_{(1+\delta)n/3+1},\ldots,S_n\}$, it is local optimal, with the objective value $f(\bm{x}_{\mathrm{local}})=((1+\delta)n/3) \cdot ((1+\delta)n/3)=(1+\delta)^2 n^2/9$. Adding a set from $\{S_1,\ldots,S_{(1+\delta)n/3}\}$ into $\bm{x}_{\mathrm{local}}$ will cover $(2-\delta)n/3-(1+\delta)n/3=(1-2\delta)n/3$ more edges. If $i$ sets from $\{S_1,\ldots,S_{(1+\delta)n/3}\}$ have been added into $\bm{x}_{\mathrm{local}}$, then deleting one set from $\bm{x}_{\mathrm{local}}$ will decrease the number of edges by $(1+\delta)n/3-i$. To make $(1+\delta)n/3-i \leq (1-2\delta)n/3$, $i$ should be at least $\delta n$. Thus, to improve the local optimal solution $\bm{x}_{\mathrm{local}}$, at least $\delta n$ sets from $\{S_1,\ldots,S_{(1+\delta)n/3}\}$ need to be added into $\bm{x}_{\mathrm{local}}$, and meanwhile at least $\delta n$ sets from $\bm{x}_{\mathrm{local}}$ have to be deleted. The approximation ratio of $\bm{x}_{\mathrm{local}}$ is $f(\bm{x}_{\mathrm{local}})/f(\bm{x}^*)=(1+\delta)/(2-\delta)$.

\begin{figure}[t!]\centering
\begin{minipage}[c]{0.5\linewidth}\centering
  \includegraphics[width=\linewidth]{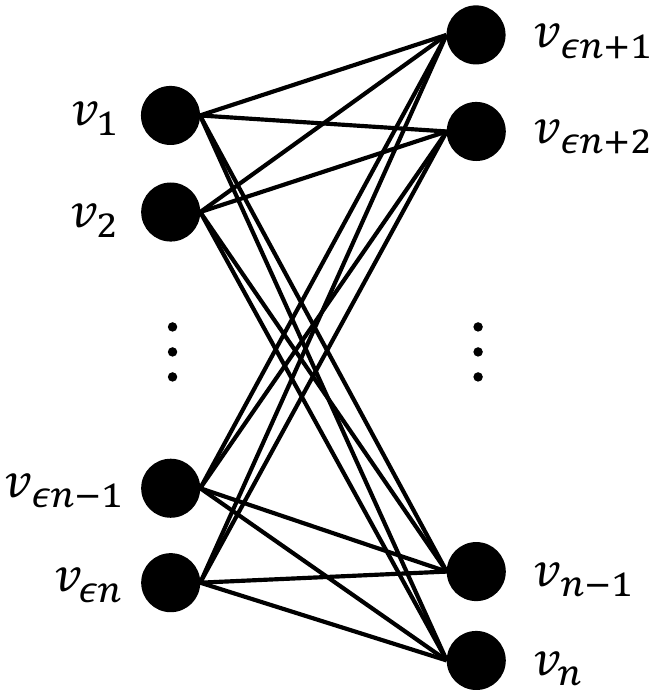}
\end{minipage}
\begin{minipage}[c]{0.44\linewidth}\centering
  \includegraphics[width=\linewidth]{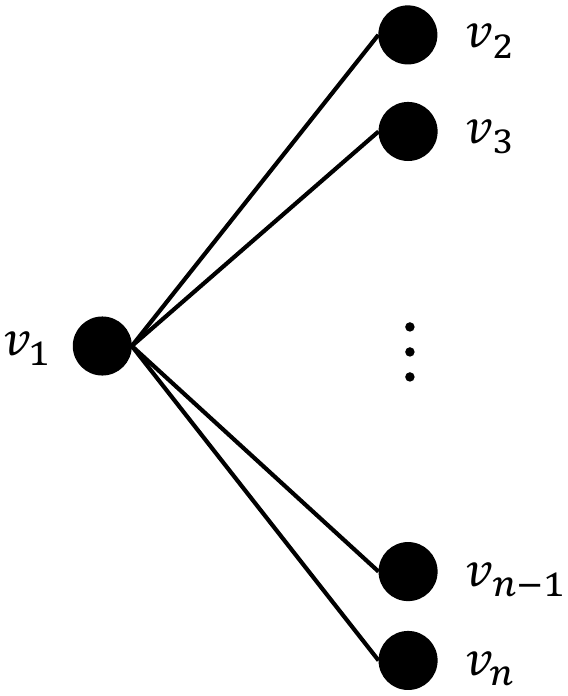}
\end{minipage}\\\vspace{0.5em}
\begin{minipage}[c]{0.5\linewidth}\centering
    \small(a) $\epsilon=(1+\delta)/3$
\end{minipage}
\begin{minipage}[c]{0.44\linewidth}\centering
    \small(b)
\end{minipage}
\caption{Two complete bipartite graphs.}\label{fig_bg}
\end{figure}

We prove in Theorem~\ref{theo_ea_mc} that on the maximum coverage instance in Example~\ref{example-mc}, $(\mu+1)$-EA requires at least exponential (w.r.t. $n$) expected running time to achieve an approximation ratio larger than $(1+\delta)/(2-\delta)$. The main proof idea is to show that there is some probability that the population of $(\mu+1)$-EA will get trapped in a local optimal solution $\bm{x}_{\mathrm{local}}$, escaping from which requires at least $2\delta n$ bits to be flipped simultaneously.

\begin{theorem}\label{theo_ea_mc}
There exists an instance of monotone submodular maximization with a size constraint (more precisely, the maximum coverage instance in Example~\ref{example-mc}), where the expected running time of $(\mu+1)$-EA with $\mu=n+1$ for achieving an approximation ratio larger than $(1+\delta)/(2-\delta)$ is at least exponential w.r.t. $n$, where $\delta$ is a small positive constant close to 0.
\end{theorem}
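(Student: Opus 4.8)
The plan is to establish a local-optimum trap: I would show that with at least constant probability the population of $(\mu+1)$-EA becomes absorbed into the local optima $\bm{x}_{\mathrm{local}}$, and that leaving this state requires a single mutation flipping $\Omega(\delta n)$ bits at once, which has probability $n^{-\Omega(\delta n)}$. By the law of total expectation this gives an exponential lower bound. The first ingredient I would isolate is a structural characterization of the solutions that beat the target ratio. Writing $a$ for the number of selected left-hand sets among $\{S_1,\dots,S_{(1+\delta)n/3}\}$ and $b$ for the number of selected right-hand sets, the number of covered edges of a feasible solution equals $\tfrac{(1+\delta)(2-\delta)n^2}{9}-\big(\tfrac{(1+\delta)n}{3}-a\big)\big(\tfrac{(2-\delta)n}{3}-b\big)$, since an edge is uncovered exactly when neither of its two endpoints is selected. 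A short computation with this identity (and the fact that adding sets only increases coverage, so the extremal cases are full-budget solutions) shows that a feasible solution has coverage strictly above $f(\bm{x}_{\mathrm{local}})=(1+\delta)^2n^2/9$, equivalently approximation ratio strictly larger than $(1+\delta)/(2-\delta)$, only if $a>\delta n$. Hence every solution that beats the target ratio must select more than $\delta n$ left-hand sets.

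For the first phase I would argue that, with probability bounded below by a positive constant, the whole population collapses onto the local optima $\bm{x}_{\mathrm{local}}$ (each using no left-hand set and exactly $k=(1+\delta)n/3$ right-hand sets) before any solution with $a>\delta n$ is ever produced. The landscape feature to exploit is that, along full-budget solutions, coverage as a function of $a$ has a strict local maximum at $a=0$ followed by a ``valley'' of width $\delta n$: it lies strictly below $f(\bm{x}_{\mathrm{local}})$ for all $0<a\le\delta n$ and rises above $f(\bm{x}_{\mathrm{local}})$ only once $a>\delta n$. Consequently a single-bit swap that nudges $a$ away from $0$ is strictly worsening, so once the population has collapsed onto the $a=0$ level it rejects every such move. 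I would formalize this by tracking a potential such as the largest number of left-hand sets present in any individual together with the total number of selected sets, and use drift/absorption arguments to show the population is driven onto the local optima without this potential ever crossing the valley into the region $a>\delta n$.

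For the second phase, once every individual equals some $\bm{x}_{\mathrm{local}}$, every population member has coverage exactly $f(\bm{x}_{\mathrm{local}})$, so an offspring is accepted only if it beats the target ratio, i.e.\ only if $a>\delta n$ by the structural characterization. Starting from a full-budget solution with $a=0$, producing such an offspring in one step requires flipping at least $\delta n$ of the left-hand $0$-bits to $1$ and, to stay within the budget $k$, at least $\delta n$ of the right-hand $1$-bits back to $0$, i.e.\ at least $2\delta n$ specific bits simultaneously. Under bit-wise mutation the probability of merely the left-hand part is at most $\binom{(1+\delta)n/3}{\delta n}(1/n)^{\delta n}\le n^{-\Omega(\delta n)}$, so the expected number of iterations to escape is $n^{\Omega(\delta n)}$ and the population stays frozen at the local optima until then; combining the two phases yields the claimed exponential expected running time.

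The main obstacle I anticipate is the first phase: proving that the population is driven into the all-$\bm{x}_{\mathrm{local}}$ configuration with at least constant probability, rather than slipping into the profitable region $a>\delta n$. This is delicate because the analysis must simultaneously account for how the infeasible initial solutions (each with about $n/2$ ones) are reduced down to the budget while keeping the number of left-hand sets below the valley threshold $\delta n$, for the interaction between coverage-based selection and the $\mu=n+1$ co-evolving lineages, and for ruling out the event that some lineage crosses the valley during the descent. Quantifying the constant-probability absorption and excluding such leakage via a careful potential/drift argument is where essentially all of the technical effort will lie; the escape estimate in the second phase is, by comparison, a routine mutation-probability calculation.
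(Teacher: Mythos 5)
Your structural characterization of Example~\ref{example-mc} (coverage of a feasible solution equals $LR-(L-a)(R-b)$ with $L=(1+\delta)n/3$, $R=(2-\delta)n/3$, so beating $f(\bm{x}_{\mathrm{local}})$ forces $a>\delta n$) and your escape estimate (probability $n^{-\Omega(\delta n)}$ per iteration once the population consists only of local optima) are both correct and match the paper. The genuine gap is your Phase~1: the claim that the population is absorbed into the all-$\bm{x}_{\mathrm{local}}$ state with constant probability is not merely the technically hard part --- under the paper's formulation it is false. The paper sets $f(\bm{x})=-1$ for every infeasible solution and uses strict selection, so an offspring of an infeasible parent is accepted only if it is itself feasible. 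By Chernoff and a union bound, with probability $1-o(1)$ all $n+1$ initial solutions have at least $(1+\delta)n/3+\Omega(n)$ one-bits; from such a parent, producing a feasible offspring requires flipping $\Omega(n)$ bits in a single mutation. Hence there is no gradual ``reduction down to the budget'' for a drift argument to track: the population simply freezes in the infeasible region, where selection provides no signal at all. Worse, nothing in the dynamics biases the eventual first feasible solution toward $a=0$; a feasible full-budget solution with a proportional split, $a\approx (1+\delta)^2 n/9\gg\delta n$, already has coverage above $f(\bm{x}_{\mathrm{local}})$ (in the limit $\delta\to 0$: $10n^2/81$ versus $9n^2/81$), so reaching the all-local-optima configuration before reaching the target ratio is an exponentially unlikely event, not a constant-probability one. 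The potential/drift argument you defer to is not deferred difficulty; it cannot exist.

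The paper resolves exactly this difficulty by giving up on constant probability: it conditions on the exponentially small but explicit event that initialization produces one copy of $\bm{x}_{\mathrm{local}}$ together with $n$ infeasible solutions (probability at least $\frac{n+1}{2^n}(1-o(1))$), and that in the next $n$ iterations the local optimum is repeatedly copied until it fills the population (probability at least $\sqrt{2\pi n}/(e(2e^2)^n)$). The point is that the conditional escape time, $n^{\Omega(\delta n)}=e^{\Omega(\delta n\ln n)}$, is super-exponential, so multiplying it by these two exponentially small probabilities still leaves a bound exponential in $n$. This trade-off --- accept an exponentially small trap probability because the escape time dominates it --- is the idea your proposal is missing. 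As an aside, your law-of-total-expectation skeleton could be salvaged with a different, genuinely constant-probability trap: the all-infeasible initial population itself. With probability $1-o(1)$ no offspring is ever accepted (hence no positive approximation ratio achieved) before some mutation flips $\Omega(n)$ bits at once, which already yields an $n^{\Omega(n)}$ lower bound; but that is a different proof from the one you outline, and your Phase~1 as stated cannot be repaired.
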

\begin{proof}
In line~1 of Algorithm~\ref{algo:ea}, $(\mu+1)$-EA generates $n+1$ initial solutions randomly, where each solution is sampled from $\{0,1\}^n$ uniformly at random. We first consider the event (denoted as $A$) that among the generated $n+1$ initial solutions, one is a local optimal solution $\bm{x}_{\mathrm{local}}$, and the other $n$ solutions do not satisfy the size constraint, i.e., contain more than $k=(1+\delta)n/3$ 1-bits. For a solution $\bm{x}$ randomly sampled from $\{0,1\}^n$, the probability of being local optimal is $P(\bm{x} \in \{0^{(1+\delta)n/3}\bm{y} \mid \bm{y} \in \{0,1\}^{(2-\delta)n/3} \wedge |\bm{y}|=(1+\delta)n/3\})=\binom{(2-\delta)n/3}{(1+\delta)n/3}/2^n\geq 1/2^n$; the probability of having more than $(1+\delta)n/3$ 1-bits is $P(|\bm{x}|>(1+\delta)n/3)=1-P(|\bm{x}|\leq (1+\delta)n/3)=1-P(|\bm{x}|\leq (1-(1-2\delta)/3)\cdot (n/2))\geq 1-e^{-\frac{n}{4}(\frac{1-2\delta}{3})^2}$, where the inequality holds by the Chernoff bound. Thus, the event $A$ happens with probability 
$$
P(A)\geq \frac{n+1}{2^n}\cdot \left(1-e^{-\frac{n}{4}(\frac{1-2\delta}{3})^2}\right)^n = \frac{n+1}{2^n}(1-o(1)),
$$
where the factor $n+1$ is because $\bm{x}_{\mathrm{local}}$ can be any of the $(n+1)$ initial solutions.

Given the event $A$, we consider that in the first $n$ iterations of $(\mu+1)$-EA, a local optimal solution is always selected from the current population in line~3 of Algorithm~\ref{algo:ea}, and no bits are flipped by the bit-wise mutation operator in line~4. Such an event is denoted as $B$. Because the objective value of a solution which does not satisfy the size constraint is set to $-1$, the copied local optimal solution will replace one solution violating the size constraint in the population after each iteration according to the population updating procedure in lines~5--8 of Algorithm~\ref{algo:ea}. Thus, if the event $B$ happens, the population of $(\mu+1)$-EA will be full of local optimal solutions. Because uniform selection is performed in line~3 of Algorithm~\ref{algo:ea} and the number of local optimal solutions increases by 1 after each iteration, the event $B$ happens with probability
\begin{align}
P(B)&=\prod^{n}_{i=1}\left(\frac{i}{n+1}\left(1\!-\!\frac{1}{n}\right)^n\right)= \frac{n!}{(n\!+\!1)^n}\left(1\!-\!\frac{1}{n}\right)^{n^2}\\
&\geq \sqrt{2\pi n}\left(\frac{n}{e}\right)^n\cdot \frac{1}{(n+1)^n}\cdot \frac{1}{(2e)^n} \geq \frac{\sqrt{2\pi n}}{e(2e^2)^n},
\end{align}
where $i/(n+1)$ is the probability of selecting a local optimal solution from the population in the $i$-th iteration, $(1-1/n)^n$ is the probability of keeping all the $n$ bits of the selected solution unchanged in mutation, the first inequality holds by $n!\geq \sqrt{2\pi n}(n/e)^n$ due to Stirling's formula and $(1-1/n)^n\geq 1/(2e)$, and the last inequality holds by $(n/(n+1))^n=(1-1/(n+1))^n\geq 1/e$. 

After the events $A$ and $B$ happen, the population consists of $n+1$ local optimal solutions from $\{0^{(1+\delta)n/3}\bm{y} \mid \bm{y} \in \{0,1\}^{(2-\delta)n/3} \wedge |\bm{y}|=(1+\delta)n/3\}$. As we have analyzed before, to improve a local optimal solution $\bm{x}_{\mathrm{local}}$, at least $\delta n$ sets from $\{S_1,\ldots,S_{(1+\delta)n/3}\}$ need to be added into $\bm{x}_{\mathrm{local}}$, and meanwhile at least $\delta n$ sets from $\bm{x}_{\mathrm{local}}$ have to be deleted. That is, when mutating $\bm{x}_{\mathrm{local}}$, it needs to flip at least $\delta n$ of its $(1+\delta)n/3$ leading 0-bits and at least $\delta n$ of its $(1+\delta)n/3$ 1-bits, the probability of which is at most
\begin{align}
&\binom{(1+\delta)n/3}{\delta n} \cdot \binom{(1+\delta)n/3}{\delta n} \cdot \left(\frac{1}{n}\right)^{2\delta n}\\
&\leq \frac{((1+\delta)n/3)^{2\delta n}}{(\delta n)!^2}\cdot \!\left(\frac{1}{n}\right)^{2\delta n} \!\!\leq \frac{1}{2\pi\delta n}\cdot \!\left(\frac{e(1+\delta)}{3\delta n}\right)^{2\delta n},
\end{align}
where the last inequality holds by $(\delta n)!\geq \sqrt{2\pi \delta n}(\delta n/e)^{\delta n}$ due to Stirling's formula. Thus, given the events $A$ and $B$, the expected running time to generate a solution better than $\bm{x}_{\mathrm{local}}$ is at least $2\pi\delta n(3\delta n/(e(1+\delta)))^{2\delta n}$.

Because the approximation ratio of $\bm{x}_{\mathrm{local}}$ is $(1+\delta)/(2-\delta)$, $P(A) \!\geq (n+1)(1-o(1))/2^n$ and $P(B)\geq \sqrt{2\pi n}/(e(2e^2)^n)$, the expected running time of $(\mu+1)$-EA for achieving an approximation ratio larger than $(1+\delta)/(2-\delta)$ is at least 
$$
\frac{n+1}{2^n}(1-o(1))\cdot \frac{\sqrt{2\pi n}}{e(2e^2)^n} \cdot 2\pi\delta n\left(\frac{3\delta n}{e(1+\delta)}\right)^{2\delta n},
$$
which is exponential w.r.t. $n$ because $\delta$ is a small positive constant close to 0. Thus, the theorem holds.
\end{proof}

The comparison between Corollary~\ref{corollary-submodular} and Theorem~\ref{theo_ea_mc} clearly shows the advantage of MAP-Elites over $(\mu+1)$-EA. That is, for monotone submodular maximization with a size constraint, MAP-Elites can always achieve the $(1-1/e)$-approximation ratio in expected polynomial time, while there exists an instance where $(\mu+1)$-EA requires at least expected exponential time to achieve an approximation ratio of nearly $1/2$. From the proofs, we can find that MAP-Elites explicitly maintains multiple cells of the behavior space, allowing it to achieve a good solution with $k$ 1-bits by following a path across different cells (i.e., gradually adding the number of 1-bits), while $(\mu+1)$-EA only focuses on the objective value, and may lose diversity in the population and get trapped in local optima (with $k$ 1-bits). This discloses the benefit of simultaneously searching for high-performing solutions at diverse regions in the behavior space, which may provide stepping stones to high-performing solutions in a region that may be difficult to be discovered if searching only in that region.

\section{Set Cover}\label{sec-setcover}

Set cover~\cite{feige1998threshold} as presented in Definition~\ref{def_sc} is a well-known NP-hard combinatorial optimization problem. Given a family $V$ of sets (where each set has a corresponding positive weight) that cover a universe $U$ of elements, the goal is to find a subset of $V$ with the minimum weight such that all the elements of $U$ are covered.

\begin{definition}[Set Cover]\label{def_sc}
Given a ground set $U=\{e_1,e_2,\ldots,e_m\}$, and a collection $V=\{S_1,S_2,\ldots,S_n\}$ of subsets of $U$ with corresponding weights $w: V \rightarrow \mathbb{R}^+$, the goal is to find a subset of $V$ (represented by $\bm{x} \in \{0,1\}^n$) such that 
\begin{align}
\mathop{\arg\min}\nolimits_{x \in \{0,1\}^n}  \;\;  \sum\nolimits_{i=1}^{n} w_ix_i \quad \text{s.t.}  \quad \bigcup\nolimits_{i: x_i=1} S_i=U.
\end{align}    
\end{definition}

Note that a Boolean vector $\bm{x} \in \{0,1\}^n$ represent a subset $X$ of $V$, where the $i$-th bit $x_i=1$ if $S_i \in X$, and $x_i=0$ otherwise. Let $w_{\mathrm{\max}}=\max\{w_i \mid 1\leq i\leq n\}$ denote the maximum weight of a single subset $S_i$. In the following analysis, we equivalently reformulate the above problem as an unconstrained minimization problem by setting \begin{align}\label{eq-sc}\forall \bm{x}\in\{0,1\}^n: f(\bm{x})=w(\bm{x})+\lambda\cdot (m-c(\bm{x})),\end{align}
where $w(\bm{x})=\sum\nolimits_{i=1}^{n} w_ix_i$ denotes the weight of $\bm{x}$, $c(\bm{x})=|\bigcup\nolimits_{i: x_i=1} S_i|$ denotes the number of elements covered by $\bm{x}$, and $\lambda>nw_{\mathrm{\max}}$ is a sufficiently large penalty. That is, a solution covering more elements is better; if covering the same number of elements, a solution with a smaller weight is better.

\subsection{Analysis of MAP-Elites}

To apply MAP-Elites in Algorithm~\ref{algo:map-elites} to solve the set cover problem, we use the number $c(\bm{x})=|\bigcup\nolimits_{i: x_i=1} S_i|$ of covered elements of a solution $\bm{x}\in \{0,1\}^n$ as the behavior descriptor. Thus, the archive $M$ maintained by MAP-Elites contains $m+1$ cells, where for any $0\leq i\leq m$, the $i$-th cell stores the best found solution covering $i$ elements. In each iteration of MAP-Elites, if an offspring solution $\bm{x}'$ with $c(\bm{x}')=j$ is generated, the $j$-th cell in the archive $M$ will be examined: if the cell is empty or the existing solution in the cell is worse than $\bm{x}'$ (i.e., has a larger weight than $\bm{x}'$), the $j$-th cell will then be occupied by $\bm{x}'$.

We prove in Theorem~\ref{theo-sc} that after running $O(mn(m+\log n + \log (w_{\max}/w_{\min})))$ expected time, MAP-Elites achieves an approximation ratio of $\ln m+1$, which has been optimal up to a constant factor, unless P$\,=\,$NP~\cite{feige1998threshold}. The proof is accomplished by following the behavior of the greedy algorithm~\cite{chvatal1979greedy}, inspired from the analysis of GSEMO~\cite{friedrich2010approximating,qian2015constrained}. The parameter $I$ of MAP-Elites is set to $m+1$, i.e., it generates $m+1$ initial solutions randomly. As for solving the problem of monotone approximately submodular maximization with a size constraint in the last section, such a setting is for fairness, because the population size $\mu$ of $(\mu+1)$-EA will be set to the number of cells (i.e., $m+1$) contained by the archive $M$. 

\begin{theorem}\label{theo-sc}
For the set cover problem, the expected running time of MAP-Elites with the parameter $I=m+1$, until finding a solution $\bm{x}$ with $c(\bm{x}) = m$ and $$f(\bm{x}) \leq (\ln m +1) \cdot \mathrm{OPT},$$ is $O(mn(m+\log n + \log (w_{\max}/w_{\min})))$, where $m$ is the size of the ground set $U$, $c(\bm{x})=|\bigcup\nolimits_{i: x_i=1} S_i|$ denotes the number of elements covered by $\bm{x}$, $\mathrm{OPT}$ denotes the optimal function value, and $w_{\max}=\max\{w_i\mid 1\leq i\leq n\}$ and $w_{\min}=\min\{w_i\mid 1\leq i\leq n\}$ denote the maximum and minimum weights of a single set $S_i$, respectively.
\end{theorem}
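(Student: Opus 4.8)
The plan is to mirror the two-phase structure of the proof of Theorem~\ref{theo-nonsubmodular}, but adapted to the coverage-based behaviour descriptor and to the minimisation nature of set cover. Phase~(1) drives the archive until it contains the empty set $\bm{0}$, which is the unique solution covering $0$ elements and thus permanently occupies cell~$0$; phase~(2) then imitates the greedy algorithm, climbing through the cells by repeatedly adding one cost-effective set, until cell~$m$ holds a full cover of weight at most $(\ln m+1)\cdot\mathrm{OPT}$.

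The engine of phase~(2) is an averaging lemma: for any $\bm{x}$ with $c(\bm{x})<m$, since the optimal cover has weight $\mathrm{OPT}$ and covers all $m-c(\bm{x})$ currently uncovered elements, there must exist a set $S_j\notin\bm{x}$ whose marginal cost-effectiveness satisfies $w_j/(c(\bm{x}\cup\{S_j\})-c(\bm{x}))\le \mathrm{OPT}/(m-c(\bm{x}))$. I would track the potential $J$ defined as the largest coverage value $c$ such that cell~$c$ stores a solution of weight at most $\mathrm{OPT}\cdot(H_m-H_{m-c})$, where $H_k=\sum_{i=1}^{k}1/i$ and $H_0=0$. After phase~(1) the empty set certifies $J\ge 0$, and $J$ never decreases because cell occupants are only replaced by lighter solutions of the same coverage, which still satisfy the weight bound. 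Selecting the cell-$J$ solution and flipping the single $0$-bit given by the averaging lemma (the set it identifies must be a $0$-bit, since a set already present covers no new element) produces a solution at some coverage $c'>c$ whose weight is at most $\mathrm{OPT}(H_m-H_{m-c})+\mathrm{OPT}(c'-c)/(m-c)\le \mathrm{OPT}(H_m-H_{m-c'})$, where the inequality uses $(c'-c)/(m-c)\le \sum_{i=m-c'+1}^{m-c}1/i=H_{m-c}-H_{m-c'}$; this solution is necessarily accepted into cell~$c'$ and raises $J$ to at least $c'$. Each such event has probability at least $1/(en(m+1))$, and since $J\in\{0,\ldots,m\}$ strictly increases, phase~(2) costs $O(m\cdot nm)=O(nm^2)$ expected iterations, and reaching $J=m$ yields $f(\bm{x})=w(\bm{x})\le \mathrm{OPT}\cdot H_m\le(\ln m+1)\cdot\mathrm{OPT}$.

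The main obstacle is phase~(1), which is genuinely harder than the corresponding step in Theorem~\ref{theo-nonsubmodular}: there the cells are indexed by the number of $1$-bits, so deleting a bit moves a solution one cell down into a guaranteed-empty cell, whereas here deleting a set changes the coverage unpredictably and the target cell may already hold a lighter solution, so the number of $1$-bits is not a monotone potential. I would therefore track the minimum weight $W_{\min}$ present in the archive, which \emph{is} non-increasing. The key estimate is a multiplicative drift: selecting the current minimum-weight solution (probability $\ge 1/(m+1)$) and flipping exactly one of its $1$-bits removes, in expectation over the uniformly chosen bit, a $1/|\bm{x}|$ fraction of its weight, so that $E[W_{\min}(t)-W_{\min}(t+1)\mid W_{\min}(t)]\ge W_{\min}(t)/(en(m+1))$; moreover the resulting lighter solution is always accepted, since its weight falls strictly below the current archive minimum. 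Applying the multiplicative drift theorem with smallest positive weight $w_{\min}$ and initial value $W_{\min}\le n\,w_{\max}$ gives an expected $O\!\big(nm\log(n\,w_{\max}/w_{\min})\big)=O\!\big(nm(\log n+\log(w_{\max}/w_{\min}))\big)$ iterations to drive $W_{\min}$ below $w_{\min}$, i.e.\ to $0$, which places $\bm{0}$ in the archive; this is precisely the source of the $\log(w_{\max}/w_{\min})$ term. Summing the two phases and the $m+1$ initialisation evaluations yields the claimed $O\!\big(mn(m+\log n+\log(w_{\max}/w_{\min}))\big)$ bound. The delicate points to verify are the acceptance argument and the monotonicity of $W_{\min}$ in phase~(1), and the exact harmonic-sum bookkeeping that keeps the invariant tight across the variable-size coverage jumps in phase~(2).
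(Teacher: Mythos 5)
Your proposal is correct and follows essentially the same route as the paper's proof: phase (1) applies multiplicative drift to the minimum archive weight (initial value at most $n\,w_{\max}$, smallest positive value $w_{\min}$) to reach the empty set, and phase (2) tracks exactly the paper's potential $K_{\max}$ (your $J$) with the bound $w(\bm{x})\leq (H_m-H_{m-k})\cdot\mathrm{OPT}$, using the same averaging/cost-effectiveness lemma and harmonic-sum bookkeeping to climb to cell $m$ in $O(nm^2)$ expected iterations. Your acceptance and monotonicity arguments, and the final combination of the two phases, match the paper's reasoning, so there is nothing to fix.
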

\begin{proof}
We divide the optimization process of MAP-Elites into two phases: (1) starts after initialization and finishes after finding the special solution $\bm{0}$ which does not cover any element; (2) starts after phase (1) and finishes after finding a solution with the desired approximation guarantee. Note that the archive $M$ will always contain at most $m+1$ solutions, i.e., $|M| \leq m+1$, because there are $m+1$ cells and the $i$-th cell only stores the best found solution covering $i$ elements.

For phase (1), we consider the minimum weight of solutions in the archive $M$ after running $t$ iterations, denoted as $X_t=\min\{w(\bm{x}) \mid \bm{x} \in M \}$. Note that $X_t=0$ implies that the solution $\bm{0}$ has been found. Then, we are to analyze the expected change of $X_t$ after one iteration, denoted as $E(X_t-X_{t+1} \mid X_t)$. Let $\bm{x}$ be a corresponding solution with $w(\bm{x})=X_t$. We can observe that $X_t$ cannot increase (i.e., $X_{t+1} \leq X_t$) because $\bm{x}$ can only be replaced by a solution which covers the same number of elements as $\bm{x}$ (i.e., belongs to the same cell as $\bm{x}$) and has a smaller weight. In the $(t+1)$-th iteration, we consider that $\bm{x}$ is selected in line~7 of Algorithm~\ref{algo:map-elites}, occurring with probability $1/|M| \geq 1/(m+1)$ due to uniform selection and $|M| \leq m+1$. Then, in line~8, let $\bm{x}^i$ denote the solution generated by flipping the $i$-th bit (i.e., $x_i$) of $\bm{x}$ and keeping other bits unchanged, which happens with probability $(1/n)(1-1/n)^{n-1} \geq 1/(en)$. If $x_i=1$, the generated solution $\bm{x}^i$ now has the smallest weight $X_t-w_i$, and must be included into the archive $M$ (otherwise, the minimum weight of the current solutions has been smaller than $X_t$, leading to a contradiction), implying $X_{t+1}=X_t-w_i$. Thus, we have
\begin{align}
&E(X_{t+1}\mid X_t) \leq X_t-\sum\limits_{i:x_i=1} \frac{w_i}{en(m+1)}.
\end{align}
Because $X_t=w(\bm{x})=\sum\nolimits_{i:x_i=1}w_i$, we have $E(X_t-X_{t+1}\mid X_t) \geq X_t/(en(m+1))$. Furthermore, $X_0 \leq nw_{\max}$, and the weight of a non-empty solution is at least $w_{\min}=\min\{w_i\mid 1\leq i\leq n\}$, i.e., the minimum weight of a single set $S_i$. According to the multiplicative drift analysis theorem~\cite{doerr2012multiplicative}, we can derive that the expected running time to make $X_t=0$ (i.e., find the solution $\bm{0}$) is at most $en(m+1)(1+\ln (nw_{\max}/w_{\min}))\in O(mn(\log n + \log (w_{\max}/w_{\min})))$.

Let $H_i=\sum^i_{j=1} 1/j$ denote the $i$-th harmonic number, and let $R_k=H_{m}-H_{m-k}$. For phase (2), we consider a quantity $K_{\max}$, which is defined as the maximum value of $k$ such that there exists a solution $\bm{x}$ in the archive $M$ with $c(\bm{x})=k$ and $w(\bm{x}) \leq R_k \cdot \mathrm{OPT}$. Then, we only need to analyze the expected running time until $K_{\max}=m$, implying having found a solution with $c(\bm{x})=m$ (i.e., covering all the elements) and $w(\bm{x}) \leq R_m \cdot \mathrm{OPT}=H_m \cdot \mathrm{OPT}<(\ln m +1)\cdot \mathrm{OPT}$.

After finding the solution $\bm{0}$ with $c(\bm{0})=0$ and $w(\bm{0})=0$, we have $K_{\max} \geq 0$. Assume that currently $K_{\max}=k < m$, and let $\bm{x}$ denote the corresponding solution with $c(\bm{x})=k$ and $w(\bm{x}) \leq R_k \cdot \mathrm{OPT}$. Because $\bm{x}$ can only be replaced by a solution which covers the same number of elements and has a smaller weight, $K_{\max}$ cannot decrease. Next, we are to show that $K_{\max}$ can increase by flipping a specific 0 bit of $\bm{x}$. Let $\bm{x}^i$ denote the solution generated by flipping the $i$-th bit of $\bm{x}$. Let $\delta=\min\{\frac{w_{i}}{c(\bm{x^{i}})-c(\bm{x})}\mid c(\bm{x}^i)-c(\bm{x})>0\}$, which must satisfy $\delta \leq \mathrm{OPT}/(m-k)$. Otherwise, for any $S_i \in \bm{x}^*\setminus \bm{x}$, $w_i> (c(\bm{x}^i)-c(\bm{x})) \cdot \mathrm{OPT}/(m-k)$, where $\bm{x}^*$ denotes an optimal solution, i.e., $c(\bm{x}^*)=m$ and $w(\bm{x}^*)=\mathrm{OPT}$; then, $\sum_{S_i \in \bm{x}^*\setminus \bm{x}} w_i > \big(\sum_{S_i \in \bm{x}^*\setminus \bm{x}}(c(\bm{x}^i)-c(\bm{x}))\big) \cdot \mathrm{OPT}/(m-k)$. Because $c(\cdot)$ denotes the number of elements covered by a solution, we have $\sum_{S_i \in \bm{x}^*\setminus \bm{x}}(c(\bm{x}^i)-c(\bm{x})) \geq c(\bm{x}^*)-c(\bm{x})$, and thus $\sum_{S_i \in \bm{x}^*\setminus \bm{x}} w_i > (c(\bm{x}^*)-c(\bm{x})) \cdot \mathrm{OPT}/(m-k) =\mathrm{OPT}$, where the equality holds by $c(\bm{x}^*)=m$ and $c(\bm{x})=k$. This contradicts with $\sum_{S_i \in \bm{x}^*\setminus \bm{x}} w_i \leq w(\bm{x}^*)= \mathrm{OPT}$. Thus, it holds that $\delta \leq \mathrm{OPT}/(m-k)$. By selecting $\bm{x}$ in line~7 of Algorithm~\ref{algo:map-elites} and flipping only the 0 bit corresponding to $\delta$ in line~8, MAP-Elites can generate a new offspring solution $\bm{x}'$ with $c(\bm{x}')=k'>k$ and
\begin{align}
w(\bm{x}') &\leq w(\bm{x})+(k'-k) \cdot \mathrm{OPT}/(m-k) \\
&\leq R_k \cdot \mathrm{OPT}+(k'-k) \cdot \mathrm{OPT}/(m-k)\\
&=\left(\frac{1}{m}+\cdots+\frac{1}{m-k+1}+\frac{k'-k}{m-k}\right)\cdot \mathrm{OPT}\\
&\leq (1/m+\cdots+1/(m-k'+1))\cdot \mathrm{OPT}\\
&=(H_m-H_{m-k'})\cdot \mathrm{OPT}= R_{k'}\cdot \mathrm{OPT}.
\end{align}
Once generated, $\bm{x}'$ will be included into the archive $M$. Otherwise, there has existed a solution in $M$ which covers $k'$ elements and has a weight no larger than $w(\bm{x}')$ and thus $R_{k'}\cdot \mathrm{OPT}$; this implies that $K_{\max}$ has already been larger than $k$, contradicting with the assumption $K_{\max}=k$. After including $\bm{x}'$, $K_{\max}$ increases from $k$ to $k'$. Because the probability of selecting $\bm{x}$ and flipping only a specific 0 bit in each iteration of MAP-Elites is at least $(1/|M|) \cdot (1/n)(1-1/n)^{n-1} \geq 1/(en(m+1))$, the expected running time for increasing $K_{\max}$ is at most $en(m+1)$. Since it is sufficient to increase $K_{\max}$ at most $m$ times for making $K_{\max}=m$, the expected running time of phase~(2) is $O(m^2n)$.

By combining the above two phases, the expected running time of the whole process is $O(mn(\log n + \log (w_{\max}/w_{\min}))+m^2n)=O(mn(m+\log n + \log (w_{\max}/w_{\min})))$. Thus, the theorem holds.
\end{proof}

\subsection{Analysis of $(\mu+1)$-EA}

Next, we will give a concrete instance of set cover, where $(\mu+1)$-EA fails to achieve a good approximation ratio in polynomial time. To apply $(\mu+1)$-EA in Algorithm~\ref{algo:ea} to solve the set cover problem, the population size $\mu$ is set to the number of cells (i.e., $m+1$) contained by the archive $M$ of MAP-Elites for fair comparison. The considered set cover instance is obtained from the complete bipartite graph in Figure~\ref{fig_bg}(b), and presented in Example~\ref{example-sc}. 

\begin{example}\label{example-sc}
The ground set $U$ contains all the $n-1$ edges of the complete bipartite graph in Figure~\ref{fig_bg}(b). Each set $S_i$ in $V$ consists of the edges adjacent to the vertex $v_i$, i.e., $S_1=\{(v_1,v_2),\ldots,(v_1,v_n)\}$; $\forall 2 \leq i \leq n$, $S_i=\{(v_i,v_1)\}$, where $(v_i,v_j)$ denotes the edge between the vertexes $v_i$ and $v_j$. Their weights are: $w_1=2^n$; $\forall 2 \leq i \leq n$, $w_i=1$.
\end{example}

In this example, the size $m$ of the ground set $U$ is $n-1$, and the maximum and minimum weights of a single set $S_i$ are $w_{\max}=2^n$, and $w_{\min}=1$, respectively. The unique optimal solution is $\bm{x}^*=01^{n-1}$ (i.e., $\{S_2,\ldots,S_n\}$), which covers all the edges, and has the minimum weight $w(\bm{x}^*)=n-1$. We can also observe that $\bm{x}_{\mathrm{local}}=10^{n-1}$ (i.e., containing only $S_1$) is local optimal, with the weight $w(\bm{x}_{\mathrm{local}})=2^n$. Furthermore, $\bm{x}_{\mathrm{local}}$ has the smallest weight except $\bm{x}^*$, implying that it needs to flip all the $n$ bits to improve $\bm{x}_{\mathrm{local}}$. The approximation ratio of $\bm{x}_{\mathrm{local}}$ is $w(\bm{x}_{\mathrm{local}})/w(\bm{x}^*)=2^n/(n-1)=2^{m+1}/m$. Note that if a solution does not cover all the edges, i.e., does not satisfy the constraint, its objective value will be penalized, as shown by the reformulated equivalent objective function in Eq.~(\refeq{eq-sc}).

We prove in Theorem~\ref{theo_ea_sc} that on the set cover instance in Example~\ref{example-sc}, $(\mu+1)$-EA requires at least exponential (w.r.t. $m$, $n$, and $\log(w_{\max}/w_{\min})$) expected running time to achieve an approximation ratio smaller than $2^{m+1}/m$. The proof idea is similar to that of Theorem~\ref{theo_ea_mc}, i.e., to show that there is some probability that the population of $(\mu+1)$-EA will get trapped in the local optimal solution $\bm{x}_{\mathrm{local}}$, escaping from which requires all the $n$ bits to be flipped simultaneously.

\begin{theorem}\label{theo_ea_sc}
There exists an instance of set cover (more precisely, Example~\ref{example-sc}), where the expected running time of $(\mu+1)$-EA with $\mu=m+1$ for achieving an approximation ratio smaller than $2^{m+1}/m$ is at least exponential w.r.t. $m$, $n$, and $\log(w_{\max}/w_{\min})$.
\end{theorem}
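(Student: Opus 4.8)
The plan is to mirror the proof of Theorem~\ref{theo_ea_mc}: I will isolate an initialization-and-early-iterations event of exponentially small but explicitly lower-bounded probability, after which the whole population of $(\mu+1)$-EA has collapsed onto the local optimum $\bm{x}_{\mathrm{local}}=10^{n-1}$, and then argue that escaping this trap forces a single mutation flipping all $n$ bits, whose tiny $(1/n)^n$ probability produces a super-exponential escape time dominating the trap probability. The one structural fact I need first is that, since $\lambda>nw_{\max}$, any $\bm{x}$ not covering all $m=n-1$ edges has $f(\bm{x})\geq\lambda>2^n=f(\bm{x}_{\mathrm{local}})$, while among solutions covering all edges only $\bm{x}^*=01^{n-1}$ (of weight $n-1$) has weight below $2^n$, since covering every edge forces selecting $S_1$ (of weight $\geq 2^n$) or all of $S_2,\dots,S_n$. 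Thus $\bm{x}_{\mathrm{local}}$ and $\bm{x}^*$ are the only solutions with $f\leq 2^n$, and the unique way to strictly improve on $\bm{x}_{\mathrm{local}}$ is to generate its bitwise complement $\bm{x}^*$.

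I would then define the two trapping events as in Theorem~\ref{theo_ea_mc}. Event $A$ is that among the $\mu=m+1=n$ uniform initial solutions exactly one equals $\bm{x}_{\mathrm{local}}$ and the other $n-1$ fail to cover all edges (hence are penalized and strictly worse than $\bm{x}_{\mathrm{local}}$). Since a random $\bm{x}$ equals $\bm{x}_{\mathrm{local}}$ with probability $2^{-n}$ and covers all edges with probability exactly $\tfrac12+2^{-n}$ (it must contain $S_1$ or all of $S_2,\dots,S_n$), disjointness over the position of $\bm{x}_{\mathrm{local}}$ gives $P(A)\geq n\cdot 2^{-n}\,(\tfrac12-2^{-n})^{\,n-1}$. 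Event $B$ is that in the first $n-1$ iterations a copy of $\bm{x}_{\mathrm{local}}$ is always selected and mutation flips no bit; each such copy then replaces one of the strictly worse solutions, so after $n-1$ iterations the population is entirely $\bm{x}_{\mathrm{local}}$. The same Stirling computation used in Theorem~\ref{theo_ea_mc} gives $P(B)=\prod_{i=1}^{n-1}\bigl(\tfrac{i}{n}(1-\tfrac1n)^n\bigr)\geq 2e\sqrt{2\pi n}/(2e^2)^n$.

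Conditioned on $A\cap B$, every parent is $\bm{x}_{\mathrm{local}}$, and by the structural fact the population stays frozen until $\bm{x}^*$ appears; bit-wise mutation turns $\bm{x}_{\mathrm{local}}$ into its complement $\bm{x}^*$ with probability exactly $(1/n)^n$, so the conditional expected escape time is at least $n^n$. Combining through $E[T]\geq P(A)\,P(B)\,n^n$ and recalling that $m=n-1$ and $\log(w_{\max}/w_{\min})=n\log 2$ are all $\Theta(n)$, I would conclude that the expected running time is exponential in $m$, $n$, and $\log(w_{\max}/w_{\min})$.

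The main obstacle is purely the bookkeeping of verifying that the super-exponential factor $n^n$ swamps the two exponentially small trapping probabilities, i.e.\ that $n^n/(8e^2)^n=(n/(8e^2))^n\to\infty$ as $n$ grows, so their product still diverges exponentially. The only genuinely new ingredient beyond Theorem~\ref{theo_ea_mc} is the structural observation fixing $\bm{x}^*$ as both the unique improving solution and the exact complement of $\bm{x}_{\mathrm{local}}$; once that is in place the argument is even cleaner than the maximum-coverage case, since the escaping mutation must flip all $n$ bits rather than merely $2\delta n$ of them.
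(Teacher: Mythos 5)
Your proposal is correct and follows essentially the same route as the paper's proof: the same initialization trap event $A$, the same takeover event $B$ with the identical product-and-Stirling bound, and the same $(1/n)^n$ all-bits-flip escape argument, combined multiplicatively into an exponential lower bound. The only differences are cosmetic --- you restrict event $A$ to initial solutions that fail to cover all edges (giving a slightly smaller but still sufficient bound of order $n/4^n$ instead of the paper's $\tfrac{n}{2^n}(1-o(1))$), and you explicitly prove the structural fact that $\bm{x}_{\mathrm{local}}$ and $\bm{x}^*$ are the only solutions with $f \leq 2^n$, which the paper asserts when introducing Example~\ref{example-sc}.
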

\begin{proof}
As the population size $\mu=m+1=n$, $(\mu+1)$-EA generates $n$ initial solutions randomly by uniform sampling from $\{0,1\}^n$ in line~1 of Algorithm~\ref{algo:ea}. Let $A$ denote the event that one initial solution is the local optimal solution $\bm{x}_{\mathrm{local}}$, and the other $n-1$ initial solutions are all worse than $\bm{x}_{\mathrm{local}}$. Because $\bm{x}_{\mathrm{local}}$ has the best objective value except the optimal solution $\bm{x}^*$, the event $A$ happens with probability 
$$
P(A)= \frac{n}{2^n}\cdot \left(1-\frac{2}{2^n}\right)^{n-1} = \frac{n}{2^n}(1-o(1)),
$$
where the factor $n$ is because $\bm{x}_{\mathrm{local}}$ can be any of the $n$ initial solutions. Given the event $A$, we then consider that in the first $n-1$ iterations of $(\mu+1)$-EA, $\bm{x}_{\mathrm{local}}$ is always selected from the current population in line~3 of Algorithm~\ref{algo:ea}, and no bits are flipped by bit-wise mutation in line~4. This event (denoted as $B$) will make the population of $(\mu+1)$-EA consist of $n$ copies of $\bm{x}_{\mathrm{local}}$, because a copy of $\bm{x}_{\mathrm{local}}$ will replace a worse solution in the population after each iteration. The event $B$ happens with probability
\begin{align}
P(B)&=\prod^{n-1}_{i=1}\left(\frac{i}{n}\left(1-\frac{1}{n}\right)^{n}\right)\geq \frac{\sqrt{2\pi (n-1)}}{e(2e^2)^{n-1}},
\end{align}
where $i/n$ is the probability of selecting $\bm{x}_{\mathrm{local}}$ from the population in the $i$-th iteration (note that the population contains $i$ copies of $\bm{x}_{\mathrm{local}}$ in the $i$-th iteration), $(1-1/n)^n$ is the probability of keeping all the $n$ bits of $\bm{x}_{\mathrm{local}}$ unchanged in mutation, and the inequality holds by $(n-1)!\geq \sqrt{2\pi (n-1)}((n-1)/e)^{n-1}$ due to Stirling's formula, $((n-1)/n)^{n-1}=(1-1/n)^{n-1} \geq 1/e$ and $(1-1/n)^n\geq 1/(2e)$. 

After the events $A$ and $B$ happen, the population consists of $n$ copies of $\bm{x}_{\mathrm{local}}$. Because $\bm{x}_{\mathrm{local}}=10^{n-1}$ has the best objective value except the optimal solution $\bm{x}^*=1^{n-1}0$, it needs to flip all the $n$ bits to make an improvement, the probability of which is $1/n^n$. Thus, given the events $A$ and $B$, the expected running time to generate a solution better than $\bm{x}_{\mathrm{local}}$ is $n^n$. Because the approximation ratio of $\bm{x}_{\mathrm{local}}$ is $2^n/(n-1)$, $P(A) \geq n(1-o(1))/2^n$, and $P(B)\geq \sqrt{2\pi (n-1)}/(e(2e^2)^{n-1})$, the expected running time of $(\mu+1)$-EA for achieving an approximation ratio smaller than $2^n/(n-1)$ is at least 
$$
\frac{n}{2^n}(1-o(1))\cdot \frac{\sqrt{2\pi (n-1)}}{e(2e^2)^{n-1}} \cdot n^n,
$$
which is exponential w.r.t. $n$. In Example~\ref{example-sc}, we have $m=n-1$ and $\log(w_{\max}/w_{\min})=\log(2^n/1)=n$, implying that the theorem holds.
\end{proof}

By comparing Theorems~\ref{theo-sc} and~\ref{theo_ea_sc}, MAP-Elites is clearly superior to $(\mu+1)$-EA for solving the set cover problem. MAP-Elites can always achieve the $(\ln m +1)$-approximation ratio in expected polynomial time, while there exists an instance where $(\mu+1)$-EA requires at least expected exponential time to achieve an approximation ratio smaller than $2^{m+1}/m$. As we have found from the analysis on the problem of monotone submodular maximization with a size constraint in the last section, the simultaneous search for high-performing solutions at diverse regions in the behavior space makes MAP-Elites better. More specifically, MAP-Elites obtains a good solution covering all elements by following a path across different cells of the behavior space (i.e., gradually covering more elements), while $(\mu+1)$-EA may lose diversity in the population and get trapped in local optima in the space of solutions covering all elements.

\section{Conclusion and Discussion}



This paper continues the line of theoretical studies on QD algorithms which have only just begun. Specifically, it contributes to providing theoretical justification for the benefit of QD algorithms, i.e., bringing better optimization. On the two NP-hard problems, i.e., monotone approximately submodular maximization with a size constraint, and set cover, we prove that MAP-Elites can achieve the (asymptotically) optimal polynomial-time approximation ratio, while $(\mu+1)$-EA requires exponential expected time on some instances. The simultaneous search of QD algorithms for high-performing solutions with diverse behaviors brings better exploration of the search space, helping avoid local optima and find better solutions. In fact, we can find more theoretical evidences by comparing the results from previous separate works on QD and EA's runtime analysis, though not completely fair. For example, Bossek and Sudholt~\shortcite{bossek2023runtime} proved that for maximizing any monotone pseudo-Boolean function over $\{0,1\}^n$, MAP-Elites that generates one random initial solution and uses the number of 1-bits of a solution as the behavior descriptor can find an optimal solution in $O(n^2 \log n)$ expected time, while Lengler and Zou~\shortcite{lengler2021exponential} proved that $(\mu+1)$-EA with $\mu_0 \leq \mu \leq n$ (where $\mu_0$ is some constant) needs superpolynomial time to optimize some monotone functions. In the future, we will theoretically verify the other benefit of QD algorithms that they can better create a map of high-performing solutions in the behavior space than separately searching for a high-performing solution in each cell of the behavior space, by analyzing the QD-Score metric, i.e., the sum of objective values across all obtained solutions in the archive.

There are many interesting theoretical works to be done, e.g., comparing different QD frameworks (MAP-Elites vs. NSLC), and studying the influence of different components (e.g., parent selection strategies and variation operators) of QD algorithms. The findings may inspire the design of better QD algorithms in practice. Another very interesting work is to study the relationship between MAP-Elites and the corresponding multi-objective EA, GSEMO, as also pointed out in~\cite{bossek2023runtime}. By treating the behavior descriptors as the extra objective functions to be optimized, GSEMO behaves somewhat similarly to MAP-Elites. The differences are: 1)~MAP-Elites only compares solutions within the same cell (i.e., having the same behavior descriptor values) while GSEMO compares different behavior descriptor values based on the domination relationship; 2)~MAP-Elites can control the granularity of the behavior space by setting the number of behavior descriptor values in a cell. The currently known results of MAP-Elites (including~\cite{bossek2023runtime} and ours) match that of GSEMO in~\cite{neumann2006minimum,friedrich2010approximating,friedrich2015maximizing,qian2019maximizing}. Thus, it would be interesting to show their performance gap on some problems, especially that MAP-Elites may overcome some difficulty of GSEMO due to its good diversity mechanism.


\section*{Acknowledgments}

This work was supported by the National Science and Technology Major Project (2022ZD0116600) and National Science Foundation of China (62276124). Chao Qian is the corresponding author. The conference version of this paper has appeared at IJCAI'24.

\bibliographystyle{named}
\bibliography{QD-Theory-arXiv}

\end{document}